\definecolor{candypink}{rgb}{0.89, 0.44, 0.48}          %
\definecolor{mediumaquamarine}{rgb}{0.4, 0.8, 0.67}     %
\definecolor{azure}{rgb}{0.0, 0.5, 1.0}                 %
\definecolor{awesome}{rgb}{1.0, 0.13, 0.32}             %
\newcommand{\pmalink}{\href{https://seohong.me/projects/pma/}{our project page}\xspace}
\newcommand{\pmavideo}{\href{https://seohong.me/projects/pma/}{videos}\xspace}
\newcommand{\pmaaddress}{{\url{https://seohong.me/projects/pma/}}}
\theoremstyle{plain}
\newtheorem{theorem}{Theorem}[section]
\newtheorem{lemma}[theorem]{Lemma}
\theoremstyle{definition}
\newtheorem{definition}[theorem]{Definition}
\theoremstyle{remark}
\newif\if@restonecol
\DeclareRobustCommand\onedot{\futurelet\@let@token\@onedot}
\def\onedot{.\xspace}
\def\eg{\emph{e.g}\onedot} 
\def\ie{\emph{i.e}\onedot} 
 \def\vs{\emph{vs}\onedot}
\newcommand{\cutsectionup}{\vspace{-5pt}}
\newcommand{\cutsectiondown}{\vspace{-3pt}}
\newcommand{\cutsubsectionup}{\vspace{-3pt}}
\newcommand{\cutsubsectiondown}{\vspace{-3pt}}
\def\eqref#1{equation~\ref{#1}}
\def\1{\bm{1}}
\def\va{{\bm{a}}}
\def\vs{{\bm{s}}}
\def\vz{{\bm{z}}}
\DeclareMathAlphabet{\mathsfit}{\encodingdefault}{\sfdefault}{m}{sl}
\SetMathAlphabet{\mathsfit}{bold}{\encodingdefault}{\sfdefault}{bx}{n}
\def\gA{{\mathcal{A}}}
\def\gD{{\mathcal{D}}}
\def\gF{{\mathcal{F}}}
\def\gM{{\mathcal{M}}}
\def\gN{{\mathcal{N}}}
\def\gP{{\mathcal{P}}}
\def\gS{{\mathcal{S}}}
\def\gT{{\mathcal{T}}}
\def\gZ{{\mathcal{Z}}}
\def\sR{{\mathbb{R}}}
\def\sV{{\mathbb{V}}}
\newcommand{\E}{\mathbb{E}}
\newcommand{\TV}{D_{\mathrm{TV}}}
\DeclareMathOperator*{\argmin}{arg\,min}
\icmltitlerunning{Predictable MDP Abstraction for Unsupervised Model-Based RL}
\begin{document}

\twocolumn[
\icmltitle{Predictable MDP Abstraction for Unsupervised Model-Based RL}

\icmlsetsymbol{equal}{*}

\begin{icmlauthorlist}
\icmlauthor{Seohong Park}{berkeley}
\icmlauthor{Sergey Levine}{berkeley}
\end{icmlauthorlist}

\icmlaffiliation{berkeley}{University of California, Berkeley}

\icmlcorrespondingauthor{Seohong Park}{seohong@berkeley.edu}

\icmlkeywords{Machine Learning, ICML}

\vskip 0.3in
]

\printAffiliationsAndNotice{}  %

\begin{abstract}
A key component of model-based reinforcement learning (RL) is a dynamics model that predicts the outcomes of actions.
Errors in this predictive model can degrade the performance of model-based controllers,
and complex Markov decision processes (MDPs) can present exceptionally difficult prediction problems.
To mitigate this issue, we propose \textbf{predictable MDP abstraction} (\textbf{PMA}):
instead of training a predictive model on the original MDP,
we train a model on a transformed MDP
with a learned action space that only permits predictable, easy-to-model actions,
while covering the original state-action space as much as possible.
As a result, model learning becomes easier and more accurate, which allows robust, stable model-based planning or model-based RL.
This transformation is learned in an \emph{unsupervised} manner, before any task is specified by the user. Downstream tasks can then be solved with model-based control in a \emph{zero-shot} fashion, without additional environment interactions.
We theoretically analyze PMA and empirically demonstrate that
PMA leads to significant improvements over prior unsupervised model-based RL approaches in a range of benchmark environments.
Our code and videos are available at \pmaaddress
\end{abstract}

\vspace{-19pt}

\section{Introduction}
\cutsectiondown
\label{sec:intro}

The basic building block of model-based reinforcement learning (RL) algorithms is a predictive model $\hat{p}(\vs'|\vs, \va)$,
typically one that predicts the next state conditioned on the previous state and action
in the given Markov decision process (MDP).
By employing predictive models with planning or RL,
previous model-based approaches have been shown to be effective in solving
a variety of complex problems, ranging from robotics \citep{daydreamer_wu2022}
to games \citep{muzero_schrittwieser2020},
in a sample-efficient manner.

However, even small errors in a predictive model can cause a model-based RL algorithm to underperform,
sometimes catastrophically \citep{hasselt2019,hvh_jafferjee2020}.
This phenomenon, referred to as \emph{model exploitation},
happens when a controller or policy exploits these errors, picking actions that the model erroneously predicts should lead to good outcomes.
This issue is further exacerbated with long-horizon model rollouts,
which accumulate prediction errors over time,
or in complex MDPs, where accurately modeling all transitions is challenging.
Previous approaches often try to address model exploitation
by estimating model uncertainty~\citep{pets_chua2018}
or only using the model for short rollouts~\citep{steve_buckman2018,mbpo_janner2019}.

\begin{figure}[t!]
    \centering
    \includegraphics[width=0.8\linewidth]{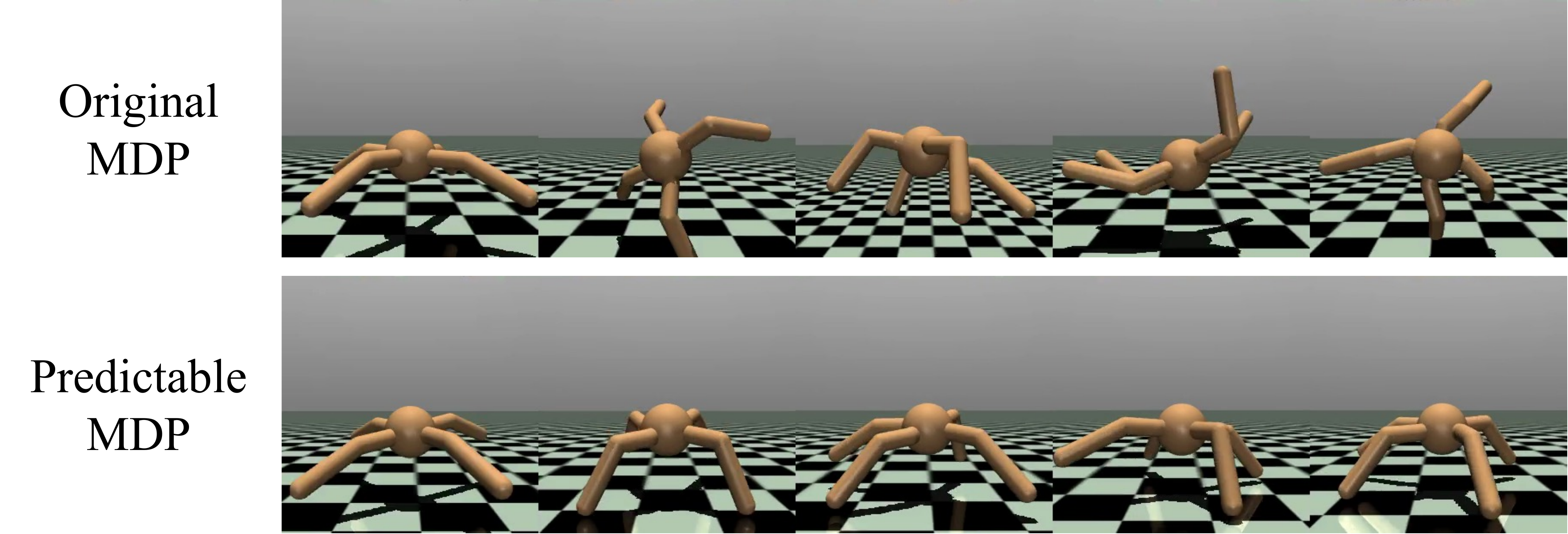}
    \vspace{-5pt}
    \caption{
    In the original Ant environment, some actions lead to unpredictable behaviors that are difficult to accurately model\footnotemark,
    which makes the learned dynamics model susceptible to catastrophic model exploitation.
    In our transformed predictable MDP, every transition is easy to model and predictable,
    which enables robust, stable model-based learning.
    }
    \label{fig:qual_ant}
    \vspace{-12pt}
\end{figure}
\footnotetext{
    Even though the Ant environment is completely deterministic,
    it is very difficult to accurately model all the possible transitions due to its complex, contact-rich dynamics (see \Cref{sec:analysis}).
}

We take a different perspective on model-based RL to tackle this challenge:
instead of training a predictive model on the original MDP,
we apply model-based RL on top of an abstracted, simplified MDP.
Namely, we first abstract the MDP into a simpler learned MDP with a transformed latent action space
by \emph{restricting} unpredictable actions,
and then build the predictive model on this simplified MDP.
Here, the transformed MDP is designed to be \emph{predictable} in the sense that every transition in the new MDP is easy to model,
while covering the original state-action space as much as possible.
As a result, there is little room for catastrophic model exploitation compared to the original, possibly complex MDP,
allowing robust model-based planning and RL.
We illustrate an example of predictable MDP transformation in \Cref{fig:qual_ant}.

We design a practical algorithm for learning predictable MDP abstractions in the setting of \emph{unsupervised} model-based RL, where the abstraction is learned in advance without any user-defined task or reward function. After unsupervised training, the learned MDP abstraction can be used to solve multiple different downstream tasks
with a model-based controller
in a \emph{zero-shot} fashion, without any environment interactions or additional model training.
The desiderata of unsupervised predictable MDP abstraction are threefold.
First, the latent actions in the transformed MDP should lead to predictable state transitions.
Second, different latent actions should lead to different outcomes.
Third, the transitions in the latent MDP should cover the original state-action space as much as possible.
In this paper, we formulate these desiderata into an information-theoretic objective and propose a practical method to optimize it.

To summarize, our main contribution in this paper is to introduce a novel perspective on model-based RL
by proposing \textbf{predictable MDP abstraction} (\textbf{PMA}) as an unsupervised model-based RL method,
which abstracts the MDP by transforming the action space to minimize model errors.
PMA can be combined with any existing model-based planning or RL method to solve downstream tasks in a zero-shot manner.
We theoretically analyze PMA and discuss when our approach can be beneficial compared to classic model-based RL.
Finally, we empirically confirm that
PMA combined with model-based RL can robustly solve a variety of tasks in seven diverse robotics environments,
significantly outperforming previous unsupervised model-based approaches.

\cutsectionup
\section{Related Work}
\cutsectiondown
\label{sec:related}

\textbf{Model-based reinforcement learning.}
Model-based RL (MBRL) involves using a predictive model that estimates the outcomes of actions in a given environment.
Previous model-based approaches utilize such learned models to maximize the reward
via planning
\citep{hernandaz1990,draeger1995,pilco_deisenroth2011,deepmpc_lenz2015,visualmpc_ebert2018,pets_chua2018,planet_hafner2019,pddm_nagabandi2019},
reinforcement learning \citep{svg_heess2015,mve_feinberg2018,steve_buckman2018,mbpo_janner2019,dreamer_hafner2020,tpc_nguyen2021},
or both \citep{mbop_argenson2021,loop_sikchi2022,tdmpc_hansen2022}.
Erroneous predictive models can yield deleterious effects on policy learning,
which is known as the model exploitation problem \citep{agnostic_ross2012,mbpo_janner2019,morel_kidambi2020,ldm_kang2022}.
To avoid making suboptimal decisions based on incorrect models,
prior works either restrict the horizon length of model rollouts \citep{mbpo_janner2019}
or employ various uncertainty estimation techniques,
such as Gaussian processes \citep{gp_rasmussen2003,pilco_deisenroth2011}
or model ensembles
\citep{epopt_rajeswaran2017,mbmpo_clavera2018,metrpo_kurutach2018,pets_chua2018,pddm_nagabandi2019,mopo_yu2020,morel_kidambi2020}.
Our work is orthogonal and complementary to these model-based RL algorithms.
We propose an action representation learning method that abstracts the MDP into one that is \emph{more predictable},
thus making model learning easier,
which can be employed in combination with a variety of existing model-based RL methods.

\textbf{Predictable behavior learning.}
Our main idea conceptually relates to prior work on predictable behavior learning.
One such work is RPC \citep{rpc_eysenbach2021}, which encourages the agent
to produce predictable behaviors by minimizing model errors.
SMiRL \citep{smirl_berseth2021} and IC2 \citep{ic2_rhinehart2021}
actively seek stable behaviors by reducing uncertainty.
While these methods incentivize the agent to behave in predictable ways or visit familiar states,
they do not aim to provide a model-based RL method,
instead utilizing the predictability bonus either for intrinsic motivation \citep{smirl_berseth2021,ic2_rhinehart2021}
or to improve robustness \citep{rpc_eysenbach2021}.
In contrast, we show that optimizing for predictability can lead to significantly more effective model-based RL performance.

\textbf{MDP abstraction and hierarchical RL.}
MDP abstraction deals with the problem of building simplified MDPs
that usually have simpler state or action spaces to make RL more tractable.
State abstraction \citep{bisim_li2006,e2c_watter2015,world_ha2018,deepmdp_gelada2019,bisim_castro2019,dreamer_hafner2020}
focuses on having a compact state representation to facilitate learning.
Temporal abstraction and hierarchical RL
\citep{option_sutton1999,option_stolle2002,oc_bacon2017,fun_vezhnevets2017,eigen_machado2017,hiro_nachum2018,diayn_eysenbach2019,ho2_wulfmeier2021,mo2_salter2022}
aim to learn temporally extended behaviors to reduce high-level decision steps.
Different from these previous approaches,
we explore a lossy approach to MDP abstraction where the action space is transformed into one that only permits more predictable transitions,
thus facilitating more effective model-based reinforcement learning.

\textbf{Unsupervised reinforcement learning.}
The goal of unsupervised RL is to acquire primitives, models, or other objects that are useful for downstream tasks through unsupervised interaction with the environment.
The process of learning a predictable MDP abstraction with our method corresponds to an unsupervised RL procedure. Prior unsupervised RL methods have used intrinsic rewards for
maximizing state entropy \citep{smm_lee2019,protorl_yarats2021,apt_liu2021},
detecting novel states \citep{icm_pathak2017,rnd_burda2019,disag_pathak2019},
learning diverse goal-condition policies \citep{skewfit_pong2020,lexa_mendonca2021},
or acquiring temporally extended skills
\citep{vic_gregor2016,diayn_eysenbach2019,dads_sharma2020,lsp_xie2020,disdain_strouse2022,lsd_park2022,cic_laskin2022}.
Notably, several unsupervised model-based approaches \citep{max_shyam2019,p2e_sekar2020,rajeswar2022}
have shown that predictive models trained via unsupervised exploration help solve downstream tasks efficiently.
However, these methods only focus on finding novel transitions (\ie, maximizing coverage),
without considering their \emph{predictability}.
Maximizing coverage without accounting for predictability can lead to model errors,
which in turn lead to model exploitation, as shown by \citet{max_shyam2019} as well as in our experiments.
Our method is also closely related to DADS \citep{dads_sharma2020},
an unsupervised skill discovery method that uses a similar mutual information (MI) objective to ours. 
However,
the main focus of our work is different from DADS:
while the goal of DADS is to acquire a set of temporally extended skills,
analogously to other works on skill discovery,
our focus is instead on transforming the action space into one that only permits predictable actions
without temporal abstraction, maximally covering the transitions in the original MDP.
We both theoretically and empirically show
that this leads to significantly better performance in a variety of model-based RL frameworks.

\cutsectionup
\section{Preliminaries and Problem Statement}
\cutsectiondown
We consider an MDP without a reward function, also referred to as a controlled Markov process (CMP),
$\gM := (\gS, \gA, \mu, p)$,
where $\gS$ denotes the state space, $\gA$ denotes the action space,
$\mu \in \gP(\gS)$ denotes the initial state distribution,
and $p: \gS \times \gA \to \gP(\gS)$ denotes the state transition distribution.
We also consider a set of $N$ downstream tasks $\gT = \{T_0, T_1, \dots, T_{N-1}\}$,
where each task corresponds to a reward function $r_i: \gS \times \gA \to \sR$ for $i \in [N]$. 
$[N]$ denotes the set of $\{0, 1, \dots, N-1\}$.
We denote the supremum of the absolute rewards as $R = \sup_{\vs \in \gS, \va \in \gA, i \in [N]} |r_i(\vs, \va)|$
and the discount factor as $\gamma$.

\textbf{Problem statement.}
In this work, we tackle the problem of \emph{unsupervised model-based RL}, which consists of two phases.
In the first unsupervised training phase,
we aim to build a predictive model in a given CMP without knowing the tasks.
In the subsequent testing phase,
we are given multiple task rewards in the same environment and aim to solve them only using the learned model
without additional training; \ie, in a zero-shot manner.
Hence, the goal is to build a model that best captures the environment so that we can later robustly employ the model to solve diverse tasks.

\cutsectionup
\section{Predictable MDP Abstraction (PMA)}
\cutsectiondown

Model-based RL methods typically learn a model $\hat{p}(\vs'|\vs, \va)$.
However, na\"{i}vely modeling all possible transitions is error-prone in complex environments,
and subsequent control methods (planning or policy optimization) can exploit these errors,
leading to overoptimistic model-based estimates of policy returns and ultimately in poor performance.
Previous works generally try to resolve this
by restricting model usage \citep{steve_buckman2018,mbpo_janner2019}
or better estimating uncertainty \citep{pets_chua2018}.
Different from previous approaches,
our solution in this work is to transform the original MDP into a \emph{predictable} latent MDP,
in which every transition is predictable.
Here, ``predictable'' also means that it is \emph{easy to model}.
Formally,
we define the unpredictability of an MDP $\gM$ as the minimum possible average model error $\epsilon$
with respect to a model class $\gF$,
a state-action distribution $d$,
and a discrepancy measure $D$:
\begin{align}
    \epsilon = \inf_{f \in \gF} \E_{(\vs, \va) \sim d(\vs, \va)}[D(p(\cdot|\vs, \va) \| f(\cdot|\vs, \va))].
\end{align}
Intuitively, this measures the irreducible model error (\ie, aleatoric uncertainty) of the environment
given the capacity of the model class $\gF$.
We note that
even in a completely deterministic environment,
there may exist an irreducible model error
if the model class $\gF$ has a finite capacity (\eg, $(512, 512)$-sized neural networks)
and the environment dynamics are complex.

After transforming the original MDP into a simplified latent MDP,
we solve downstream tasks on top of the latent MDP with model-based RL.
Since the latent MDP is trained to be maximally predictable,
there is little room for model exploitation compared to the original environment. %
We can thus later robustly employ the learned latent predictive model
to solve downstream tasks with a model-based control method.

\begin{figure}[t!]
    \centering
    \includegraphics[width=0.7\linewidth]{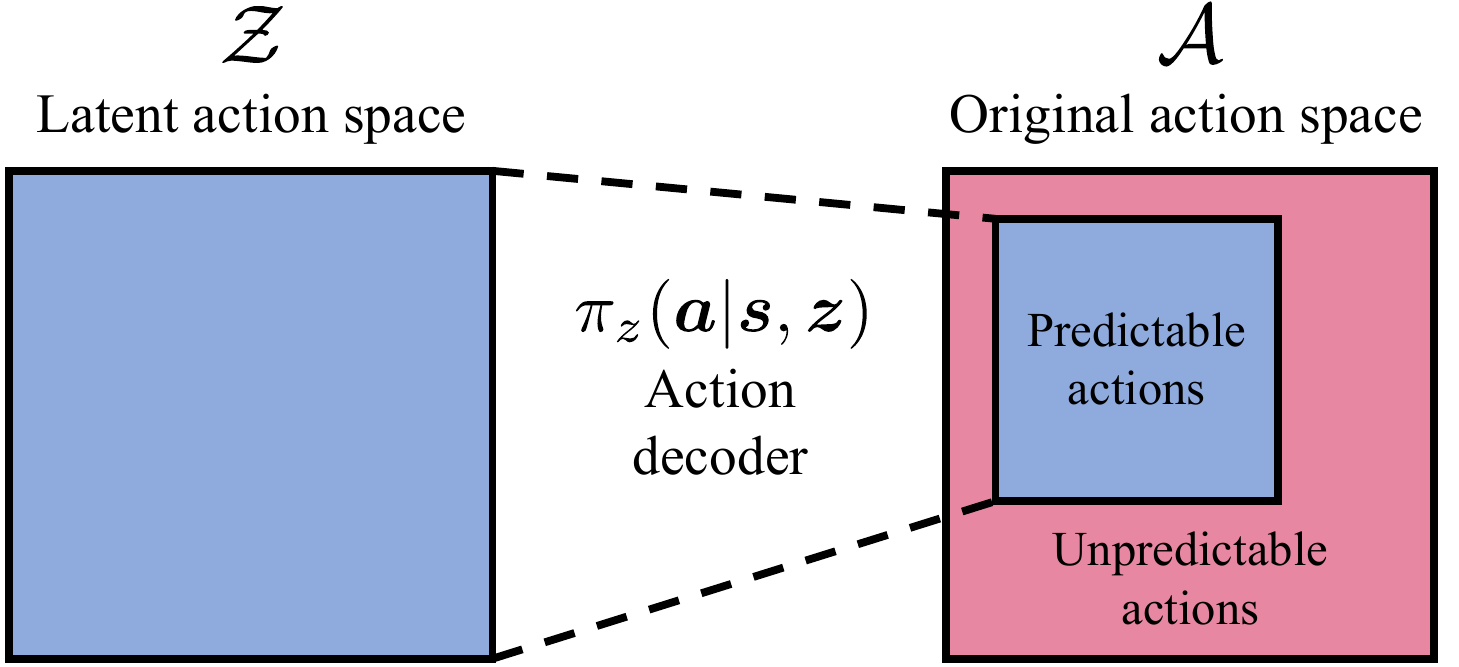}
    \vspace{-5pt}
    \caption{
        The action decoder reparameterizes the action space to only permit \emph{predictable} transitions.
    }
    \label{fig:pma_illust}
\end{figure}

\begin{figure}[t!]
    \centering
    \vspace{-5pt}
    \includegraphics[width=0.7\linewidth]{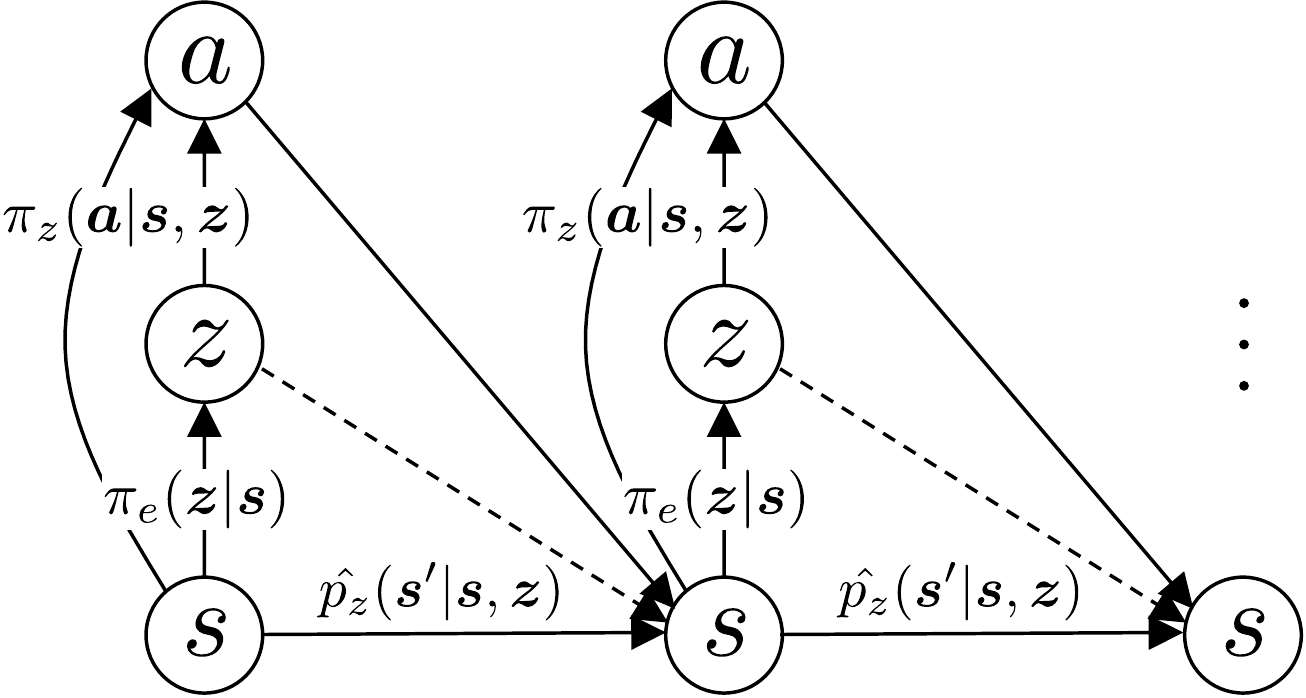}
    \vspace{-5pt}
    \caption{
        Architecture of PMA during unsupervised training.
        The exploration policy $\pi_e(\vz|\vs)$ selects a latent action,
        which is decoded into the original action space by the action decoder $\pi_z(\va|\vs, \vz)$.
        The latent model $\hat{p}_z(\vs'|\vs, \vz)$ predicts outcomes in the latent MDP.
    }
    \vspace{-12pt}
    \label{fig:pma_arch}
\end{figure}

\cutsubsectionup
\subsection{Architecture}
\cutsubsectiondown
The main idea of PMA is to \emph{restrict} the original action space in a lossy manner
so that it only permits predictable transitions.
Formally, we transform the original MDP $\gM$ into a predictable latent MDP defined as
$\gM_P := (\gS, \gZ, \mu, p_z)$
with the original state space $\gS$, the latent action space $\gZ$,
and the latent transition dynamics $p_z: \gS \times \gZ \to \gP(\gS)$.

PMA has three learnable components: an action decoder, a latent predictive model, and an exploration policy.
The \emph{action decoder} policy $\pi_z(\va|\vs, \vz)$ decodes latent actions into the original action space,
effectively reparameterizing the action space in a lossy manner (\Cref{fig:pma_illust}).
The \emph{latent predictive model} $\hat{p}_z(\vs'|\vs, \vz)$ predicts the next state in the latent MDP, %
which is \emph{jointly} trained with the action decoder to make the learned MDP as predictable as possible.
Finally, the \emph{exploration policy} $\pi_e(\vz|\vs)$ selects $\vz$'s to train the PMA's components during the unsupervised training phase,
maximizing the coverage in the original state space.
We illustrate the components of PMA in \Cref{fig:pma_arch}.

When building a latent predictive model $\hat{p}_z(\vs'|\vs, \vz; \bm{\theta})$,
we derive our objective from a Bayesian perspective,
integrating an information-theoretic representation learning goal
with information gain on the posterior over the parameters $\bm{\theta}$.
This naturally leads to the emergence of both an information-seeking exploration objective
and an MI-based representation learning method for the latent action space.

After unsupervised training,
once we get a reward function in the testing phase,
we replace the exploration policy with a \emph{task policy} $\pi(\vz|\vs)$,
which aims to select latent actions to solve the downstream task on top of our action decoder and predictive model.
This task policy can either be derived based on a planner,
or it could itself be learned via RL inside the learned model,
as we will describe in \Cref{sec:mbrl}.
Both approaches operate in \emph{zero shot},
in the sense that they do not require any additional environment interaction beyond the unsupervised phase.

\cutsubsectionup
\subsection{Objective}
\cutsubsectiondown

We now state the three desiderata of our unsupervised predictable MDP abstraction:
(\emph{i}) The latent transitions $p_z(\vs'|\vs, \vz)$ in the predictable MDP should be as predictable as possible
(\ie, minimize aleatoric uncertainty).
(\emph{ii}) The outcomes of latent actions should be as different as possible from one another
(\ie, maximize action diversity to preserve as much of the expressivity of the original MDP as possible).
(\emph{iii}) The transitions in the predictable MDP should cover the original transitions as much as possible
(\ie, encourage exploration by minimizing epistemic uncertainty).
These three goals can be summarized into the following concise information-theoretic objective:
\vspace{-2pt}
\begin{align}
    \max_{\pi_z, \pi_e} I(\bm{S}';(\bm{Z}, \bm{\Theta})|\bm{\gD}),  \label{eq:main_obj}
\end{align}
where
$\bm{\gD}$ denotes the random variable (RV) of the entire training dataset up to and including the current state $\bm{S}$,
$(\bm{S}, \bm{Z}, \bm{S}')$ denotes the RVs of $(\vs, \vz, \vs')$ tuples from the policies,
and $\bm{\Theta}$ denotes the RV of  %
the parameters of the latent predictive model $\hat{p}_z(\vs'|\vs, \vz; \bm{\theta})$.
Intuitively, this objective requires learning a latent action space, represented by $\pi_z$,
as well as an exploration policy $\pi_e$ in this latent action space,
such that at each transition the resulting next state is easy to predict from the latent action and the model parameters.
The inclusion of model parameters may seem like an unusual choice,
but it leads naturally to an information gain exploration scheme
that maximizes state coverage.

\Cref{eq:main_obj} can be decomposed as follows, revealing three terms that directly map onto our desiderata:
\vspace{-2pt}
\begin{align}
    &I(\bm{S}';(\bm{Z}, \bm{\Theta})|\bm{\gD}) \\
    =& I(\bm{S}';\bm{Z}|\bm{\gD}) + I(\bm{S}';\bm{\Theta}|\bm{\gD}, \bm{Z}) \\
    =& I(\bm{S}';\bm{Z}|\bm{S}) + I(\bm{S}';\bm{\Theta}|\bm{\gD}, \bm{Z}) \\
    =& -\underbrace{H(\bm{S}'|\bm{S},\bm{Z})}_{\text{(\emph{i}) predictability \ }}
        +\underbrace{H(\bm{S}'|\bm{S})}_{\text{(\emph{ii}) diversity}} \nonumber \\
    &+\underbrace{H(\bm{\Theta}|\bm{\gD}, \bm{Z}) -H(\bm{\Theta}|\bm{\gD}, \bm{Z}, \bm{S}')}_{\text{(\emph{iii}) information gain}}.
    \label{eq:pma_decomp}
\end{align}
The first term in \Cref{eq:pma_decomp} maximizes predictability
by reducing the entropy of the next state distribution $p_z(\vs'|\vs, \vz)$,
making the latent MDP maximally predictable.
The second term increases the entropy of the marginalized next state distribution,
effectively making the resulting states from different $z$'s different from one another.
The third term minimizes epistemic uncertainty by maximizing \emph{information gain},
the reduction in the uncertainty of the predictive model's parameters after knowing $\bm{S}'$.

With the objective in \Cref{eq:main_obj} as an intrinsic reward, %
we can optimize both the action decoder policy and exploration policy with RL.
As a result,
they will learn to produce the optimal $\vz$'s and $\va$'s that
in the long-term lead to maximal coverage of the original state space,
while making the resulting latent MDP as predictable as possible.
Also, we simultaneously train the latent predictive model $\hat{p}_z(\vs'|\vs, \vz; \bm{\theta})$
so that we can later use it for planning in the latent MDP.

\cutsubsectionup
\subsection{Practical Implementation}
\cutsubsectiondown
We now describe a practical method to optimize our main objective in \Cref{eq:main_obj}.
Since it is generally intractable to exactly estimate mutual information or information gain,
we make use of several approximations.

\textbf{Estimating $I(\bm{S'};\bm{Z}|\bm{S})$.}
First, for the first two terms in \Cref{eq:pma_decomp}, we employ a variational lower-bound approximation as follows
\citep{im_barber2003}:
\begin{align}
    &I(\bm{S}';\bm{Z}|\bm{S})
    = -H(\bm{S}'|\bm{S}, \bm{Z}) + H(\bm{S}'|\bm{S}) \\
    &\geq \E [ \log \hat{p}_z(\vs'|\vs, \vz; \bm{\phi}) - \log p_z(\vs'|\vs) ] \\
    &\approx \E [ \underbrace{\log \hat{p}_z(\vs'|\vs, \vz; \bm{\phi})
        - \log \frac{1}{L} \sum_{i=1}^{L} \hat{p}_z(\vs'|\vs, \vz_i; \bm{\phi}}_{:= r_{\text{emp}}(\vs, \vz, \vs')}) ], \label{eq:emp}
\end{align}
where $\hat{p}_z(\vs'|\vs, \vz; \bm{\phi})$ is a variational lower-bound (VLB) of $p_z(\vs'|\vs, \vz)$.
Also, we approximate the intractable marginal entropy term $H(\bm{S}'|\bm{S})$
with $L$ random samples of $\vz$'s from the latent action space \citep{dads_sharma2020}.
These approximations provide us
with a tractable intrinsic reward that can be optimized with RL.
Here, we note that the second term in \Cref{eq:emp} is a biased estimator for $\log p_z(\vs'|\vs)$,
since it is estimating an expectation inside the $\log$ with samples,
but we found this approach to still work well in practice,
and indeed multiple prior works have also explored
such a biased estimator for mutual information objectives in RL~\citep{dads_sharma2020,ibol_kim2021}.
Exploring unbiased lower bounds~\citep{mi_poole2019}
for this MI objective is an interesting direction for future work.
\textbf{Estimating information gain.}
Next, we need to estimate the information gain term in \Cref{eq:pma_decomp}.
This term could be approximated directly using prior methods that propose exploration via information gain,
\eg, using a variational approximation~\citep{vime_houthooft2016}.
In our implementation, however, we use a more heuristic approximation
that we found to be simpler to implement based on ensemble disagreement,
motivated by prior works~\citep{max_shyam2019,rp1_ball2020,p2e_sekar2020,disdain_strouse2022}.
Namely, we first approximate the model posterior with 
an ensemble of $E$ predictive models, $\{\hat{p}_z(\vs'|\vs, \vz; \bm{\theta}_i)\}_{i \in [E]}$
with $p(\bm{\theta}|\gD) = \frac{1}{E} \sum_i \delta (\bm{\theta} - \bm{\theta}_i)$.
Each component models the transitions as conditional Gaussian
with the mean given by a neural network and a unit diagonal covariance,
$\vs' \sim \gN (\bm{\mu}(\vs, \vz; \bm{\theta}_i), I)$.
We then use the variance of the ensemble means with a coefficient $\beta$, %
\begin{align}
    \E [
        \underbrace{\beta \cdot \mathrm{Tr}[\sV_i[\bm{\mu}(\vs, \vz; \bm{\theta}_i)]]}_{:= r_{\text{dis}}(\vs, \vz, \vs')}
    ], \label{eq:disag}
\end{align}
as a simple (though crude) estimator for information gain
$I(\bm{S}';\bm{\Theta}|\bm{\gD}, \bm{Z})$.
We provide a detailed justification in \Cref{sec:appx_approx}.
Intuitively, \Cref{eq:disag} encourages the agent to explore states that have not been previously visited,
where the ensemble predictions do not agree with one another.
\textbf{Training PMA.}
With these approximations,
we use \mbox{$r_{\text{emp}}(\vs, \vz, \vs') + r_{\text{dis}}(\vs, \vz, \vs')$} as an intrinsic reward
for the action decoder $\pi_z(\va|\vs, \vz)$.
For the exploration policy $\pi_e(\vz|\vs)$,
if we assume the action decoder is optimal,
we can use \mbox{$H(\bm{Z}|\bm{S}) + r_{\text{dis}}(\vs, \vz, \vs')$} as an intrinsic reward.
This can be optimized with any \emph{maximum entropy} RL method.
However, in practice, we find that
it is sufficient in our experiments to simply use a maximum entropy policy $\pi_e(\cdot|\vs) = \text{Unif}(\gZ)$
since our action decoder also maximizes $r_{\text{dis}}(\vs, \vz, \vs')$.
Finally, we fit our VLB predictive model $\hat{p}_z(\vs'|\vs, \vz; \bm{\phi})$
and ensemble models $\{\hat{p}_z(\vs'|\vs, \vz; \bm{\theta}_i)\}$ using the $(\vs, \vz, \vs')$ tuples sampled from our policies.
We describe the full training procedure of PMA in \Cref{sec:appx_training} and \Cref{alg:pma}.

\cutsubsectionup
\subsection{Connections to Prior Work}
\cutsubsectiondown
PMA's objective is related to several prior works in unsupervised RL.
For example, if we set $\beta = 0$ and \mbox{$\pi_e(\vz_t|\vs_t) = \vz_{t-1}$} for $t \geq 1$, we recover DADS \citep{dads_sharma2020},
a previous unsupervised skill discovery method.
Also, if we set \mbox{$\pi_z(\va|\vs, \vz) = \vz$}, %
PMA becomes similar to prior unsupervised model-based approaches using disagreement-based intrinsic rewards \citep{max_shyam2019,p2e_sekar2020}.
However, these methods either do not aim to cover the state-action space %
or do not consider predictability, %
which makes them suboptimal or unstable.
In \Cref{sec:exp}, we empirically compare PMA with these prior approaches
and demonstrate that our full objective makes a substantial improvement over them.
Additionally, we theoretically compare PMA with DADS in \Cref{sec:appx_pma_dads}.

\cutsectionup
\section{Model-Based Learning with PMA}
\cutsectiondown
\label{sec:mbrl}

After completing the unsupervised training of PMA,
we can employ the learned latent predictive model to optimize a reward function with model-based planning or RL.
In this section, we present several ways to utilize our predictable MDP to solve downstream tasks
in a \emph{zero-shot} manner. %

\cutsubsectionup
\subsection{Model-Based Learning with PMA}
\cutsubsectiondown
\label{sec:mbrl_ex}

After training the model, PMA can be combined with any existing model-based planning or RL method.
Specifically, we can apply any off-the-shelf model-based RL method on top of the latent action space $\gZ$
and the learned latent dynamics model $\hat{p}_z(\vs'|\vs, \vz)$ to maximize downstream task rewards,
where we use the mean of the ensemble model outputs as the latent dynamics model.
By planning over the latent action space,
we can effectively prevent model exploitation as hard-to-predict actions are filtered out.

In our experiments, we study two possible instantiations of model-based learning:
one based on model-predictive control, and one based on approximate dynamic programming (\ie, actor-critic RL),
where the learned model is used as a ``simulator'' without additional environment samples.
Note that both variants solve the new task in ``zero shot,''
in the sense that they do not require any additional collection of real transitions in the environment.

\textbf{Model predictive path integral (MPPI).}
MPPI \citep{mppi_williams2016} is %
a sampling-based zeroth-order planning algorithm
that optimizes a short-horizon sequence of (latent) actions at each time step,
executes the first action in the sequence, and then replans.
We refer to \Cref{sec:appx_training_mppi} and \Cref{alg:mppi} for the full training procedure.

\textbf{Model-based policy optimization (MBPO).}
MBPO \citep{mbpo_janner2019} is a Dyna-style model-based RL algorithm
that trains a model-free RL method
on top of truncated model-based rollouts starting from intermediate environment states.
In our zero-shot setting,
we train the task policy only using \emph{purely} model-based rollouts,
whose starting states are sampled from the restored replay buffer from unsupervised training.
We refer to \Cref{sec:appx_training_mbpo} and \Cref{alg:mbpo} for the full training procedure.

\cutsubsectionup
\subsection{Addressing Distributional Shift}
\cutsubsectiondown
\label{sec:mopo}
Using a fixed, pre-trained model for model-based control is inherently vulnerable to \emph{distributional shift}
as the test-time controller might find some ``adversarial'' $\vz$ values
that make the agent state out of distribution.
This issue applies to our zero-shot setting as well,
even though every transition in our latent MDP is trained to be predictable.
As such,
we explicitly penalize the agent for visiting out-of-distribution states,
following prior offline model-based RL methods \citep{mopo_yu2020,morel_kidambi2020},
which also deals with the same issue.
As we already have an ensemble of latent predictive models,
we use the following maximum disagreement between ensemble models \citep{morel_kidambi2020} as a penalty with a coefficient $\lambda$:
\begin{align}
    u(\vs, \vz) = -\lambda \cdot \max_{i, j \in [E]} \| \mu(\vs, \vz; \bm{\theta}_i) - \mu(\vs, \vz; \bm{\theta}_j) \|^2.
\end{align}
During task-specific planning or RL, we add this penalty to the task reward,
similarly to MOPO \citep{mopo_yu2020}.

\cutsectionup
\section{Theoretical Results}
\cutsectiondown

Predictable MDP abstraction is a lossy procedure.
In this section, we theoretically analyze the degree to which this lossiness influences the performance of a policy in the abstracted MDP,
and provide practical insights on when PMA can be useful compared to classic model-based RL.
All formal definitions and proofs can be found in \Cref{sec:appx_theory}.

\cutsubsectionup
\subsection{PMA Performance Bound}
\cutsubsectiondown
We first state the performance bound of PMA.
Formally, for the original MDP $\gM = (\gS, \gA, \mu, p, r)$,
we define the MDP with a \emph{learned} dynamics model $\hat{p}$
as $\hat{\gM} = (\gS, \gA, \mu, \hat{p}, r)$,
where we assume that $\hat{p}$ is trained on the dataset $\gD$ collected by $\pi_\gD$.
For our predictable latent MDP $\gM_P = (\gS, \gZ, \mu, p_z, r)$, we similarly define $\hat{\gM}_P = (\gS, \gZ, \mu, \hat{p}_z, r)$, $\gD_P$, and $\pi_{\gD_P}$.
For a policy $\pi(\va|\vs)$ in the original MDP, we define its corresponding latent policy
that best mimics the original one as $\pi_z^{\phi^*}(\vz|\vs)$,
and its induced next-state distribution as $p_z^{\phi^*}(\vs'|\vs, \va)$ (please see \Cref{sec:appx_theory} for the formal definitions).
We now state our performance bound of PMA with a learned latent dynamics model as follows:
\begin{theorem}
If the abstraction loss, the model error, and the policy difference are bounded as follows:
\begin{align}
    \E_{(\vs, \va) \sim d^{\pi}(\vs, \va)} [\TV (p(\cdot|\vs, \va) \| p_z^{\phi^*}(\cdot|\vs, \va))] &\leq \epsilon_a, \nonumber \\
    \E_{(\vs, \vz) \sim d^{\pi_{\gD_P}}(\vs, \vz)} [\TV (p_z(\cdot|\vs, \vz) \| \hat{p}_z(\cdot|\vs, \vz))] &\leq \epsilon_m', \nonumber \\
    \E_{\vs \sim d^{\pi_{\gD_P}}(\vs)} [\TV (\pi_z^{\phi^*}(\cdot|\vs) \| \pi_{\gD_P}(\cdot|\vs))] &\leq \epsilon_\pi', \nonumber
\end{align}
the performance difference of $\pi$ between the original MDP and the predictable latent model-based MDP can be bounded as:
\begin{align}
    |J_{\gM}(\pi) - J_{\hat{\gM}_P}(\pi_z^{\phi^*})| \leq \frac{R}{(1-\gamma)^2}(2 \gamma \epsilon_a + 4 \epsilon_\pi' + 2\gamma \epsilon_m').
    \label{eq:main_pma_bound}
\end{align}
\end{theorem}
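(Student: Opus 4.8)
The plan is to bound the quantity $|J_{\gM}(\pi) - J_{\hat{\gM}_P}(\pi_z^{\phi^*})|$ by interposing two intermediate MDPs and invoking the triangle inequality, so the proof reduces to three separate simulation-lemma-style arguments that each contribute one of the three error terms $\epsilon_a$, $\epsilon_\pi'$, $\epsilon_m'$. Concretely I would chain
\begin{align*}
|J_{\gM}(\pi) - J_{\hat{\gM}_P}(\pi_z^{\phi^*})| &\leq |J_{\gM}(\pi) - J_{\gM_P^{\phi^*}}(\pi_z^{\phi^*})| \\
 &\quad + |J_{\gM_P^{\phi^*}}(\pi_z^{\phi^*}) - J_{\gM_P}(\pi_z^{\phi^*})| \\
 &\quad + |J_{\gM_P}(\pi_z^{\phi^*}) - J_{\hat{\gM}_P}(\pi_z^{\phi^*})|,
\end{align*}
where $\gM_P^{\phi^*}$ is the MDP whose dynamics are the decoder-induced distribution $p_z^{\phi^*}(\vs'|\vs,\va)$ viewed through the action decoder, so that running $\pi_z^{\phi^*}$ in it is distributionally identical to running $\pi$ in a surrogate of $\gM$. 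The first term is controlled by $\epsilon_a$ (abstraction loss / how well the decoded latent policy mimics $\pi$'s transitions), the second by $\epsilon_\pi'$ (the policy discrepancy between $\pi_z^{\phi^*}$ and the data-collecting policy $\pi_{\gD_P}$, which is needed because the other error bounds are stated under $d^{\pi_{\gD_P}}$ rather than $d^{\pi_z^{\phi^*}}$), and the third by $\epsilon_m'$ (the learned-model error in the latent MDP).

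For each of the three pieces I would apply a standard telescoping/simulation-lemma computation: write $J_{M_1}(\pi) - J_{M_2}(\pi)$ as $\frac{1}{1-\gamma}\E_{(\vs,\va)\sim d^{\pi}_{M_1}}[\gamma(\E_{M_1}[\cdot] - \E_{M_2}[\cdot])]$ plus, where the policies differ, a term $\E[\sum_t \gamma^t (\ldots)]$ that picks up the policy-TV distance against the value function, which is bounded in magnitude by $R/(1-\gamma)$. Bounding value functions by $R/(1-\gamma)$ and transition differences by the relevant $\TV$ gives each term a $\frac{R}{(1-\gamma)^2}$ prefactor. Tracking the constants: the model-error and abstraction terms each come in with a factor $2\gamma$ (the $2$ from converting $\TV$ to an $L^1$ difference of value functions), and the policy-difference term with a factor $4$ (two applications — roughly, one for changing the action distribution and one for the induced state-distribution mismatch, or equivalently a factor $2$ from $\TV$ and a factor $2$ from the visitation-distribution shift), which is exactly what appears on the right-hand side of \Cref{eq:main_pma_bound}. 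A mild subtlety is that the hypotheses bound expected $\TV$ under $d^{\pi}$ (for $\epsilon_a$) versus under $d^{\pi_{\gD_P}}$ (for $\epsilon_m'$, $\epsilon_\pi'$), so the terms must be grouped so that the model/policy errors are only ever evaluated against the data distribution, which is the reason $\epsilon_\pi'$ is introduced as a separate quantity rather than folded in.

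The main obstacle I expect is getting the visitation-distribution bookkeeping exactly right across the two action spaces: the performance-difference lemma is naturally stated for two MDPs sharing a state-action space and policy, but here we change the action space ($\gA \to \gZ$) via the decoder and simultaneously change which policy is run. The cleanest fix is to push everything down to the state-transition level — for a fixed state $\vs$, the law of $\vs'$ under "$\pi$ in $\gM$" versus "$\pi_z^{\phi^*}$ in $\gM_P$" differs by at most $\epsilon_a$ in expected $\TV$ by definition of $p_z^{\phi^*}$ — so that the action space never explicitly enters and the three comparisons all become comparisons of state-indexed transition kernels, to which a single state-space simulation lemma applies. The second delicate point is ensuring the $\epsilon_\pi'$ term's coefficient really is $4$ and not $2$ or $6$: this requires being careful that switching from the visitation distribution $d^{\pi_z^{\phi^*}}$ to $d^{\pi_{\gD_P}}$ (needed so the hypothesis on $\epsilon_m'$ applies) itself costs a policy-difference term, on top of the direct contribution of the policy gap to the return, giving two $2$'s. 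I would isolate that step as a short lemma bounding $\|d^{\pi_1} - d^{\pi_2}\|_1 \leq \frac{2}{1-\gamma}\E_{\vs\sim d^{\pi_1}}[\TV(\pi_1(\cdot|\vs)\|\pi_2(\cdot|\vs))]$ and then combine.
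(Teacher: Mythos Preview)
Your proposal is essentially correct and lands on the same ingredients as the paper, but your three-term split is muddled in a way worth flagging. The intermediate object $\gM_P^{\phi^*}$ is never cleanly defined, and as you describe it (``dynamics $p_z^{\phi^*}(\vs'|\vs,\va)$ viewed through the action decoder, so that running $\pi_z^{\phi^*}$ in it is distributionally identical to running $\pi$ in a surrogate of $\gM$'') it collapses to $\gM_P$ itself, making your middle term vanish. The paper avoids this by using a \emph{two}-term split
\[
|J_{\gM}(\pi) - J_{\hat{\gM}_P}(\pi_z^{\phi^*})| \leq |J_{\gM}(\pi) - J_{\gM_P}(\pi_z^{\phi^*})| + |J_{\gM_P}(\pi_z^{\phi^*}) - J_{\hat{\gM}_P}(\pi_z^{\phi^*})|,
\]
handling the first term via the exact identity $J_{\gM_P}(\pi_z^{\phi^*}) = J_{\gM^{\phi^*}}(\pi)$ where $\gM^{\phi^*} = (\gS,\gA,\mu,p_z^{\phi^*},r)$ (this is precisely your ``push everything down to the state-transition level'' idea, made into an equality rather than an inequality), and then the simulation lemma on $\gM$ versus $\gM^{\phi^*}$ with the \emph{same} policy $\pi$ gives the $2\gamma\epsilon_a$ term directly. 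For the second term the paper interposes not an intermediate \emph{MDP} but the intermediate \emph{policy} $\pi_{\gD_P}$:
\[
|J_{\gM_P}(\pi_z^{\phi^*}) - J_{\gM_P}(\pi_{\gD_P})| + |J_{\gM_P}(\pi_{\gD_P}) - J_{\hat{\gM}_P}(\pi_z^{\phi^*})| \leq \tfrac{R}{(1-\gamma)^2}\bigl(2\epsilon_\pi' + (2\epsilon_\pi' + 2\gamma\epsilon_m')\bigr),
\]
which is exactly your ``two $2$'s'' heuristic for the $4\epsilon_\pi'$ coefficient, but arranged so the model error is only ever evaluated under $d^{\pi_{\gD_P}}$ without needing a separate visitation-shift lemma. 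Your later paragraphs correctly diagnose both subtleties; the only fix needed is to replace the ill-defined $\gM_P^{\phi^*}$ with the policy interposition $\pi_{\gD_P}$ inside the latent MDP.
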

Intuitively, PMA's performance bound consists of the following three factors:
(\emph{i}) the degree to which we lose from the lossy action decoder ($\epsilon_a$),
(\emph{ii}) the model error in the latent predictive model ($\epsilon_m'$),
and (\emph{iii}) the distributional shift between the data-collecting policy and the task policy ($\epsilon_\pi'$).
Hence, the bound becomes tighter if we have better state-action coverage and lower model errors,
which is precisely what PMA aims to achieve (\Cref{eq:pma_decomp}).

\begin{figure}[t!]
    \centering
    \includegraphics[width=\linewidth]{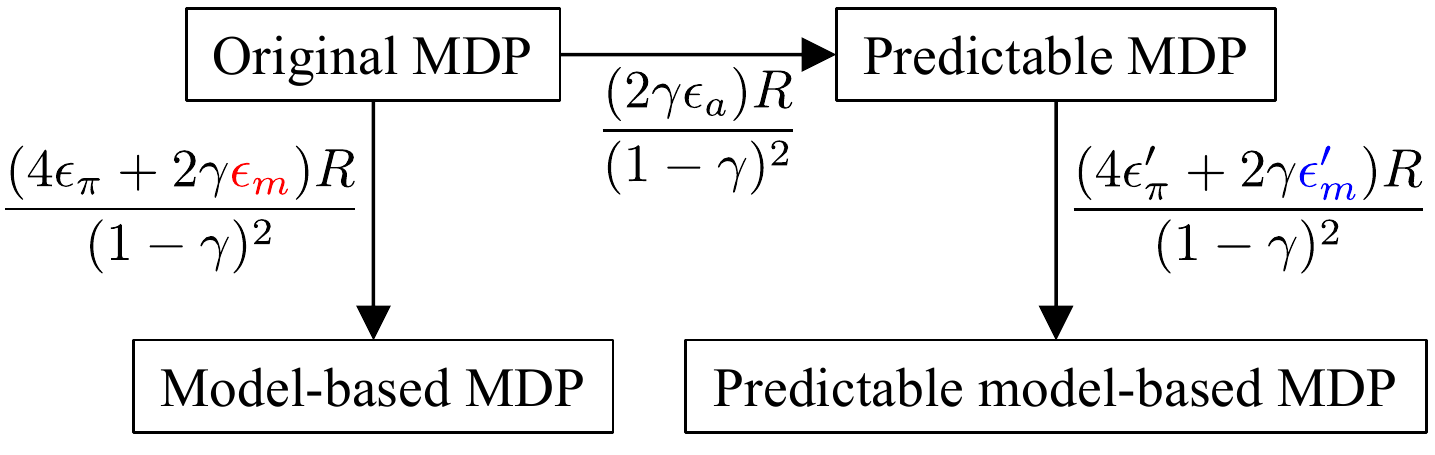}
    \caption{
    Performance bound between four MDPs.  %
    When ${\color{red} \epsilon_m} \gg \epsilon_a + {\color{blue} \epsilon_m'}$, PMA provides a tighter bound than classic MBRL.
    }
    \label{fig:pma_theory}
\end{figure}

\begin{figure*}[t!]
    \centering
    \includegraphics[width=\linewidth]{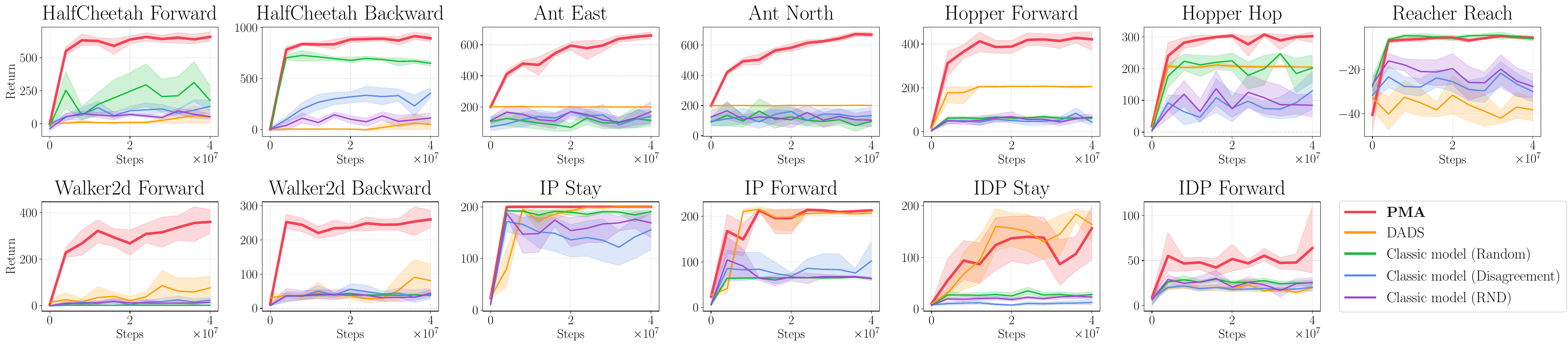}
    \vspace{-10pt}
    \caption{
    Comparison of periodic zero-shot planning (MPPI combined with the MOPO penalty)
    performances among unsupervised model-based RL methods.
    PMA demonstrates the best performance in most tasks, especially in Ant and Walker2d.
    }
    \vspace{-10pt}
    \label{fig:mppi_all}
\end{figure*}

\cutsubsectionup
\subsection{When Should We Use PMA over Classic MBRL?}
\cutsubsectiondown
\label{sec:when_pma}
To gain practical insights into PMA,
we theoretically compare PMA with classic model-based RL.
We first present the performance bound of classic MBRL \citep{mbpo_janner2019}:
\begin{theorem}
If the model error and the policy difference are bounded as follows:
\begin{align}
    \E_{(\vs, \va) \sim d^{\pi_\gD}(\vs, \va)} [\TV (p(\cdot|\vs, \va) \| \hat{p}(\cdot|\vs, \va))] &\leq \epsilon_m, \nonumber \\
    \E_{\vs \sim d^{\pi_\gD}(\vs)} [\TV (\pi(\cdot|\vs) \| \pi_\gD(\cdot|\vs))] &\leq \epsilon_\pi, \nonumber
\end{align}
the performance difference of $\pi$ between $\gM$ and $\hat{\gM}$ can be bounded as:
\begin{align}
    |J_{\gM}(\pi) - J_{\hat{\gM}}(\pi)| \leq \frac{R}{(1-\gamma)^2}(4 \epsilon_\pi + 2\gamma \epsilon_m). \label{eq:main_naive_bound}
\end{align}
\end{theorem}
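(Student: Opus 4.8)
The plan is to establish a generic simulation-lemma-style bound of the form
\begin{align}
|J_{\gM_1}(\pi_1) - J_{\gM_2}(\pi_2)| \leq \frac{R}{(1-\gamma)^2}\left(2\gamma \cdot (\text{dynamics discrepancy}) + 4 \cdot (\text{policy discrepancy})\right), \nonumber
\end{align}
and then instantiate $\gM_1 = \gM$ (with policy $\pi$) and $\gM_2 = \hat\gM$ (with the same policy $\pi$). Here the policy discrepancy is $\epsilon_\pi$ — the TV gap between $\pi$ and the data-collecting policy $\pi_\gD$ under $d^{\pi_\gD}$ — and the dynamics discrepancy is $\epsilon_m$ — the TV gap between $p$ and $\hat p$ under $d^{\pi_\gD}$. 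The proof then mirrors the telescoping argument used for \Cref{eq:main_pma_bound}, but is strictly simpler because there is no action decoder and no action-space abstraction, so the lossy-abstraction term $\epsilon_a$ is absent and we work entirely in the original state-action space $\gS \times \gA$.

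First I would recall the standard decomposition $J_\gM(\pi) - J_{\hat\gM}(\pi) = \sum_{t=0}^{\infty} \gamma^t \left(\E_{d_t^\gM}[r] - \E_{d_t^{\hat\gM}}[r]\right)$, where $d_t$ denotes the state-action marginal at step $t$ under the respective dynamics, and bound each term by $R \cdot \TV(d_t^\gM \| d_t^{\hat\gM})$. Next I would bound the growth of these marginal TV distances over time via a one-step recursion: at each step, the marginal can drift either because the dynamics differ ($p$ vs $\hat p$, contributing $\epsilon_m$) or because the policy evaluated differs from the one that generated the coverage distribution ($\pi$ vs $\pi_\gD$, contributing $\epsilon_\pi$). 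The key technical device is that the assumed bounds are expectations under $d^{\pi_\gD}$, so I would invoke the standard trick (as in Janner et al.\ / MBPO's Lemma B.3-type argument) that $d^{\pi_\gD}$ dominates the relevant on-policy distributions up to the horizon factor, letting the per-step errors be charged against the fixed data distribution. Summing the geometric-type recursion yields one factor of $1/(1-\gamma)$ from the return horizon and a second from the error accumulation, giving the $1/(1-\gamma)^2$ prefactor, with coefficient $2\gamma$ on $\epsilon_m$ (the dynamics term only enters after the first transition, hence the $\gamma$) and $4$ on $\epsilon_\pi$ (the factor-of-two from converting a policy TV bound into a state-action TV bound, doubled again through the recursion).

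The main obstacle — really the only subtle point — is handling the distributional-shift term $\epsilon_\pi$ rigorously: the hypothesis only controls $\TV(\pi(\cdot|\vs)\|\pi_\gD(\cdot|\vs))$ in expectation over $d^{\pi_\gD}(\vs)$, not pointwise and not over the on-policy state distribution $d^\pi$. Bridging this gap requires the telescoping/change-of-measure argument that re-expresses the on-policy occupancy in terms of the data occupancy while tracking the accumulated error, which is exactly the place where the second $1/(1-\gamma)$ enters. Everything else is routine: the reward bound is immediate from $R = \sup|r_i|$, the dynamics-drift step is a direct application of the triangle inequality for TV under a Markov kernel, and the geometric summation is mechanical. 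Since \Cref{eq:main_pma_bound} (the PMA bound) is already stated and presumably proved in \Cref{sec:appx_theory} with the harder $\epsilon_a$ term included, this theorem follows as the special case where the abstraction is the identity map ($\gZ = \gA$, $\pi_z = \mathrm{id}$, $\epsilon_a = 0$), so I would also note that it can alternatively be obtained as an immediate corollary of the machinery developed there.
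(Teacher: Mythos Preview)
Your outline is workable but differs from the paper's route in two respects, and your explanation of the coefficient $4$ is off.

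The paper does \emph{not} run a single direct recursion on $\TV(d_t^{\gM,\pi}\|d_t^{\hat\gM,\pi})$. Instead it first proves a clean two-system lemma (\Cref{thm:perfdiff}): for MDPs $\gM_1,\gM_2$ and policies $\pi_1,\pi_2$, if both the dynamics gap and the policy gap are measured under $d_1^{\pi_1}$, then $|J_{\gM_1}(\pi_1)-J_{\gM_2}(\pi_2)|\le \frac{R}{(1-\gamma)^2}(2\gamma\epsilon_m+2\epsilon_\pi)$, established via the Bellman flow constraint on occupancy measures rather than your time-indexed telescoping (both techniques work; the flow argument avoids the per-$t$ bookkeeping). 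The theorem then follows by a triangle inequality through $J_\gM(\pi_\gD)$: one application of the lemma with $(\gM_1,\pi_1)=(\gM,\pi_\gD)$, $(\gM_2,\pi_2)=(\gM,\pi)$ gives $2\epsilon_\pi$, and a second with $(\gM_1,\pi_1)=(\gM,\pi_\gD)$, $(\gM_2,\pi_2)=(\hat\gM,\pi)$ gives $2\epsilon_\pi+2\gamma\epsilon_m$. The $4\epsilon_\pi$ is therefore $2+2$ from two invocations, not ``doubled again through the recursion'' as you wrote. Your single-pass recursion with a change-of-measure step would, once made precise, unfold into essentially this same triangle; routing through $\pi_\gD$ explicitly is what cleanly handles the obstacle you correctly flagged (that $\epsilon_m$ is only controlled under $d^{\pi_\gD}$).

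Your closing remark that this is the $\epsilon_a=0$ specialization of \Cref{eq:main_pma_bound} is logically correct but reversed from the paper's development: the paper proves this theorem first and then reuses its structure (via \Cref{thm:abs_model_bound}) inside the PMA bound, not the other way around.
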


By comparing \Cref{eq:main_pma_bound} and \Cref{eq:main_naive_bound},
we can see that PMA leads to a tighter bound when $4 \epsilon_\pi + 2\gamma {\color{red} \epsilon_m}
> 2 \gamma \epsilon_a + 4 \epsilon_\pi' + 2\gamma {\color{blue} \epsilon_m'}$ (\Cref{fig:pma_theory}).
Intuitively, this condition corresponds to ${\color{red} \epsilon_m} \gg \epsilon_a + {\color{blue} \epsilon_m'}$,
if we assume that the data collection policies in both cases have a similar divergence from the desired policy $\pi$.
This indicates that when the reduction in the model error by having predictable transitions outweighs the abstraction loss,
PMA can be more beneficial than classic MBRL.

\textbf{When can PMA be practically useful?}
PMA is useful when the optimal policies for the tasks %
mainly consist of predictable transitions so that we can reduce the model error $\epsilon_m'$ while maintaining a small $\epsilon_a$
(the average abstraction loss over the transition distribution of the optimal policies \emph{of our interest}).
For instance, in real-world driving scenarios,
we can achieve most of our driving purposes without modeling (and even by actively avoiding)
unpredictable behaviors like breaking the car in diverse ways,
which makes PMA beneficial.
Similar arguments can be applied to many robotics environments,
as we empirically demonstrate in \Cref{sec:exp}.
On the other hand, we could imagine MDPs
where optimal behavior requires intentionally visiting unpredictable regions of the state space,
in which case PMA could be suboptimal.

\cutsectionup
\section{Experiments}
\cutsectiondown
\label{sec:exp}

In our experiments, we study the performance of PMA as an unsupervised model-learning algorithm
for downstream \emph{zero-shot} model-based RL,
and analyze the degree to which PMA can learn more predictable models that enable longer-horizon simulated rollouts.
In particular, we aim to answer the following questions:
(\emph{i}) Does PMA lead to better zero-shot task performances in diverse tasks?
(\emph{ii}) Does PMA enable robust longer-horizon planning, without suffering from model exploitation?
(\emph{iii}) Does PMA learn more predictable models?

\textbf{Experimental setup.}
Since PMA does not require access to the task reward during the model training process,
we focus our comparisons on other \emph{unsupervised} model-based RL methods
that operate under similar assumptions:
a pre-training phase with interactive access to the MDP but not its reward function,
followed by a \emph{zero-shot} evaluation phase.
Previous unsupervised model-based approaches \citep{max_shyam2019,disag_pathak2019,p2e_sekar2020,rajeswar2022}
typically pre-train a classic dynamics model of the form $\hat{p}(\vs'|\vs, \va)$
using data gathered by some exploration policy.
We consider three of them as our baselines:
classic models (CMs) $\hat{p}(\vs'|\vs, \va)$ trained with
(\emph{i}) random actions (``Random''),
(\emph{ii}) ensemble disagreement-based exploration (``Disagreement'') \citep{disag_pathak2019},
which was previously proposed as an unsupervised data collection scheme for model learning
in several works~\citep{max_shyam2019,p2e_sekar2020,rajeswar2022},
and (\emph{iii}) random network distillation (``RND'') \citep{rnd_burda2019},
another data collection method considered by \citet{rajeswar2022}.
We also compare to DADS \citep{dads_sharma2020},
a previous unsupervised skill discovery method
that also learns a latent action dynamics model $\hat{p}_z(\vs'|\vs, \vz)$
but aims to find compact, temporally extended behaviors, rather than converting the original MDP into a more predictable one.
For the benchmark,
we test PMA and the four previous methods
on seven MuJoCo robotics environments \citep{mujoco_todorov2012,openaigym_brockman2016}
with $13$ diverse tasks.
We note that,
in our experiments,
we always use an ensemble disagreement penalty (MOPO penalty, \Cref{sec:mopo}) individually tuned for
every method, task, and controller, to ensure fair comparisons.
Every experiment is run with $8$ random seeds and we present $95\%$ confidence intervals in the plots.

\cutsubsectionup
\subsection{Model-Based Planning with PMA}
\cutsubsectiondown
In order to examine whether PMA leads to better planning performance,
we first evaluate the models learned by PMA
and each of the prior approaches using zero-shot planning for a downstream task.
PMA and DADS use latent models $\hat{p}_z(\vs'|\vs, \vz)$,
while the other methods all use ``classic'' models of the form $\hat{p}(\vs'|\vs, \va)$,
and differ only in their unsupervised data collection strategy.
We perform the comparison on seven MuJoCo environments (HalfCheetah, Ant, Hopper, Walker2d, InvertedPendulum (``IP''), InvertedDoublePendulum (``IDP''), and Reacher)
with $13$ tasks.
During unsupervised training of these methods,
we periodically run MPPI planning (\Cref{sec:mbrl_ex}) combined with the MOPO penalty on the downstream tasks
(these trials are not used for model training, which is completely unaware of the task reward),
and report its results in \Cref{fig:mppi_all}.
The results show that PMA achieves the best performance in most tasks.
Especially, PMA is the only successful unsupervised model-based method in Ant,
whose complex, contact-rich dynamics make it difficult for classic models to succeed
because erroneous model predictions often result in the agent flipping over.
PMA successfully solves the tasks
since our predictable abstraction effectively prevents such errors.

\begin{figure}[t!]
    \centering
    \includegraphics[width=\linewidth]{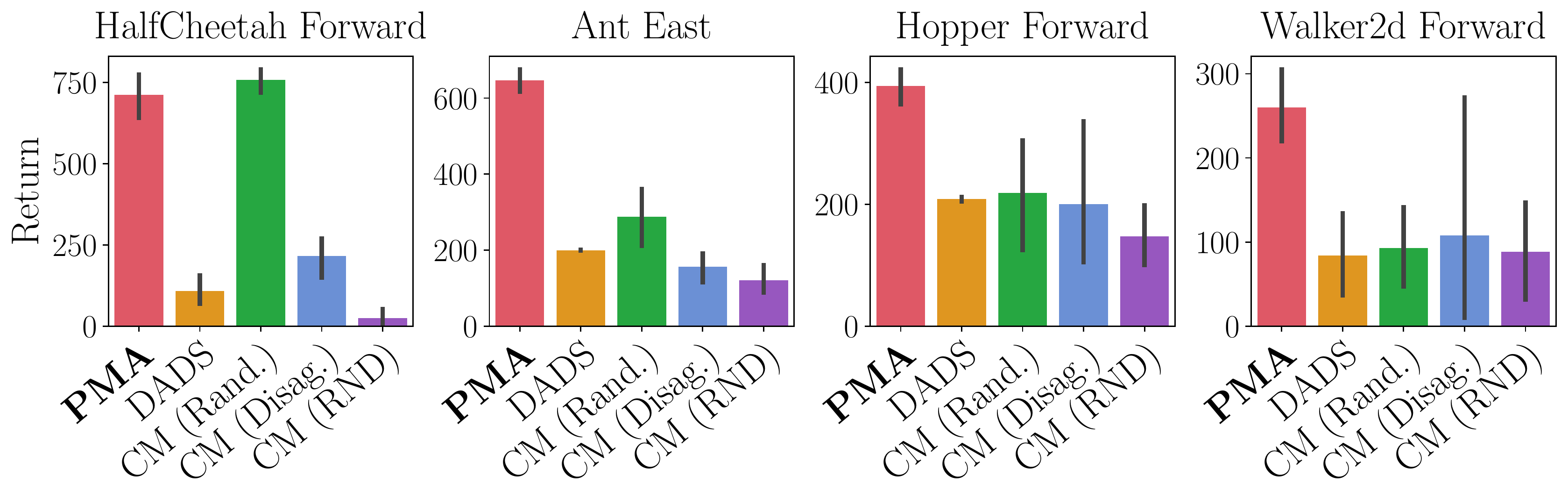}
    \vspace{-20pt}
    \caption{
    Comparison of unsupervised model-based RL methods using
    MBPO combined with the MOPO penalty.
    PMA mostly outperforms the prior approaches often by large margins.
    }
    \label{fig:mbpo_all}
\end{figure}
\begin{figure}[t!]
    \centering
    \includegraphics[width=\linewidth]{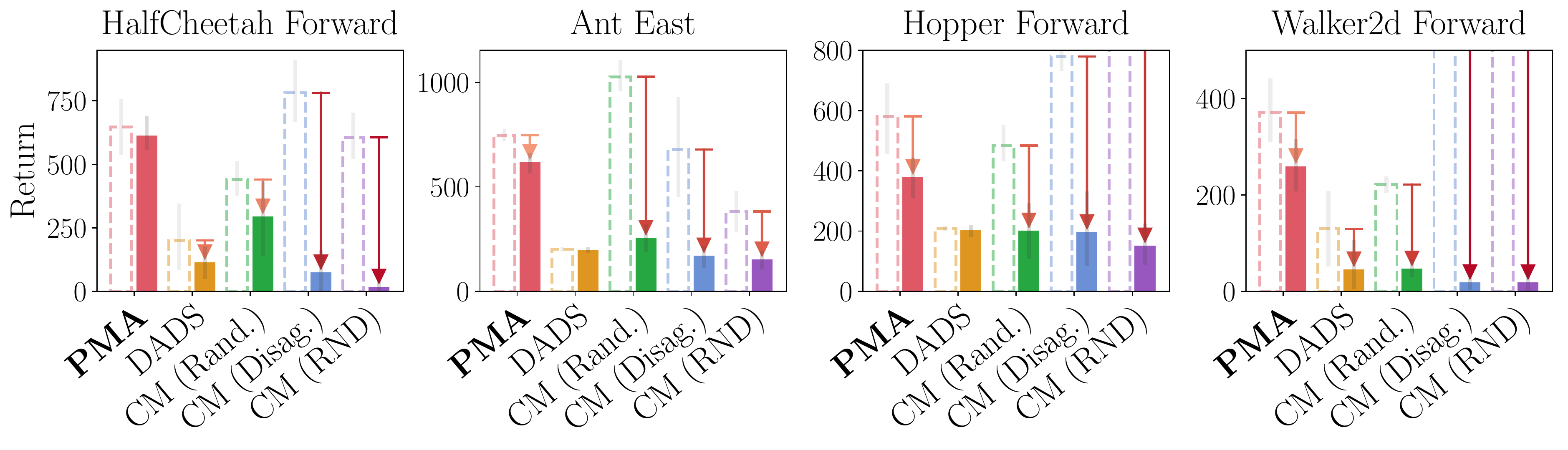}
    \vspace{-20pt}
    \caption{
    Performances of SAC combined with the MOPO penalty trained on purely model-based full-length rollouts.
    Dotted boxes indicate predicted returns and solid boxes indicate true returns.
    While classic models suffer from model exploitation in this long-horizon setting,
    as indicated by the drop from the predicted return to the actual return,
    PMA suffers a modest drop from the predicted return, and performs significantly better.
    }
    \vspace{-5pt}
    \label{fig:sac_all}
\end{figure}

\cutsubsectionup
\subsection{Model-Based RL with PMA}
\cutsubsectiondown
While the planning method used in the previous section uses the learned models with short-horizon MPC,
we can better evaluate the capacity of the PMA model and the baselines to make faithful long-horizon predictions
by using them with a long-horizon model-free RL procedure,
essentially treating the model as a ``simulator.''
We study two approaches in this section.
The first is based on MBPO \citep{mbpo_janner2019}, which we describe in \Cref{sec:mbrl_ex}.
The second approach is SAC \citep{sac_haarnoja2018}
on top of full-length model-based rollouts, %
which in some sense is the most literal way to use the learned model as a ``simulator'' of the true environment.
This second approach requires significantly longer model-based rollouts (up to $200$ time steps),
and though it performs worse in practice,
it provides a much more stringent test of the models' ability
to make long-horizon predictions without excessive error accumulation.
In both evaluation schemes, we use the MOPO penalty to prevent distributional shifts.

\Cref{fig:mbpo_all} and \Cref{fig:sac_all} present the results with MBPO and SAC, respectively.
In both settings, PMA mostly outperforms prior model-based approaches,
suggesting that PMA is capable of making reliable long-horizon predictions.
Also, by comparing the Hopper and Walker2d performances of \Cref{fig:mppi_all} and \Cref{fig:mbpo_all},
we find that classic models fail with MPPI and require a complex controller like MBPO to succeed,
whereas PMA with a simple planner can achieve similar results to MBPO owing to its predictable dynamics.
In the full-length SAC plots in \Cref{fig:sac_all},
we compare the models' predicted returns and the actual returns.
We find that the drops from PMA's predicted returns to actual returns are generally modest,
which indicates that our predictable abstraction effectively prevents model exploitation.
DADS similarly shows small performance differences as it also restricts actions,
but the absolute performance of DADS falls significantly behind PMA
due to its limited coverage of the state-action space.
On the other hand,
classic models tend to be erroneously optimistic about the returns because of the complex dynamics,
suffering from model exploitation.

\begin{figure}[t!]
    \centering
    \includegraphics[width=\linewidth]{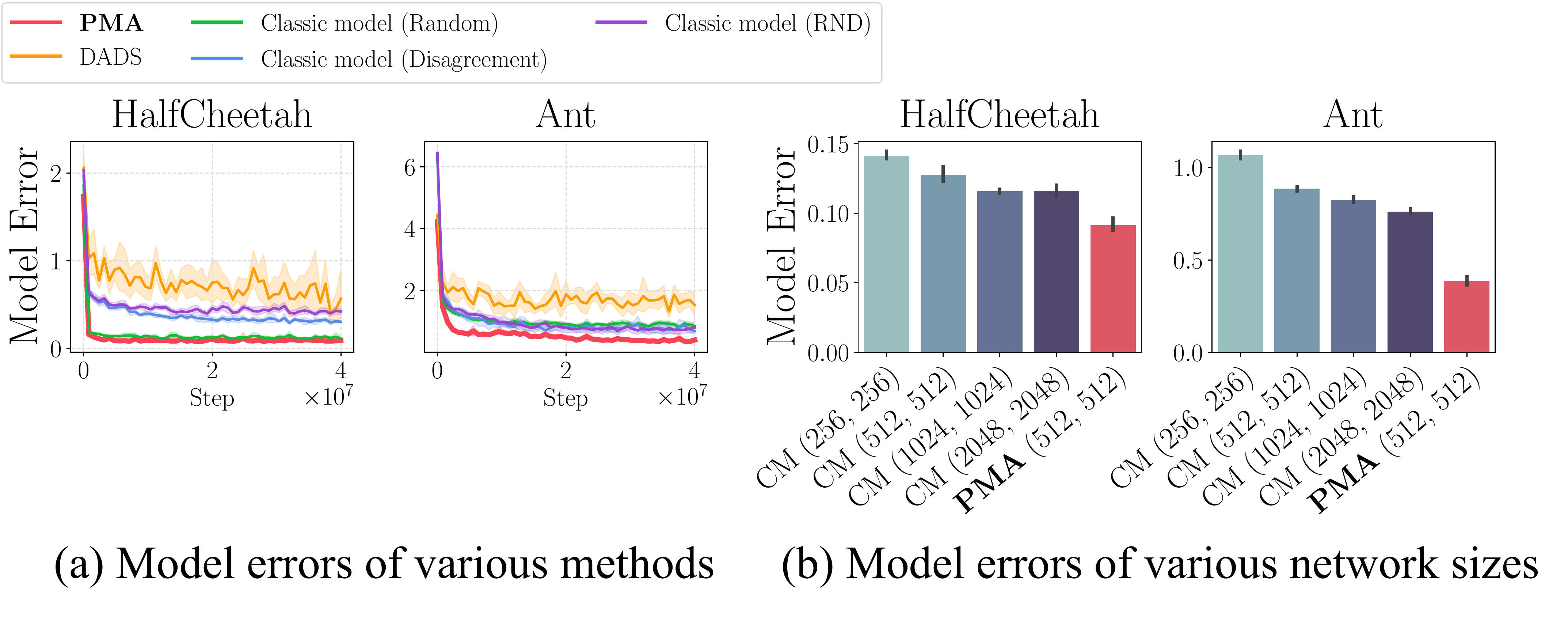}
    \vspace{-20pt}
    \caption{
    (a) PMA achieves the lowest model errors due to our predictability objective (\pmavideo).
    (b) Even in deterministic environments, it is challenging to completely reduce
    the errors in classic models even with larger neural networks.
    }
    \vspace{-10pt}
    \label{fig:mse}
\end{figure}
\begin{figure}[t!]
    \centering
    \includegraphics[width=\linewidth]{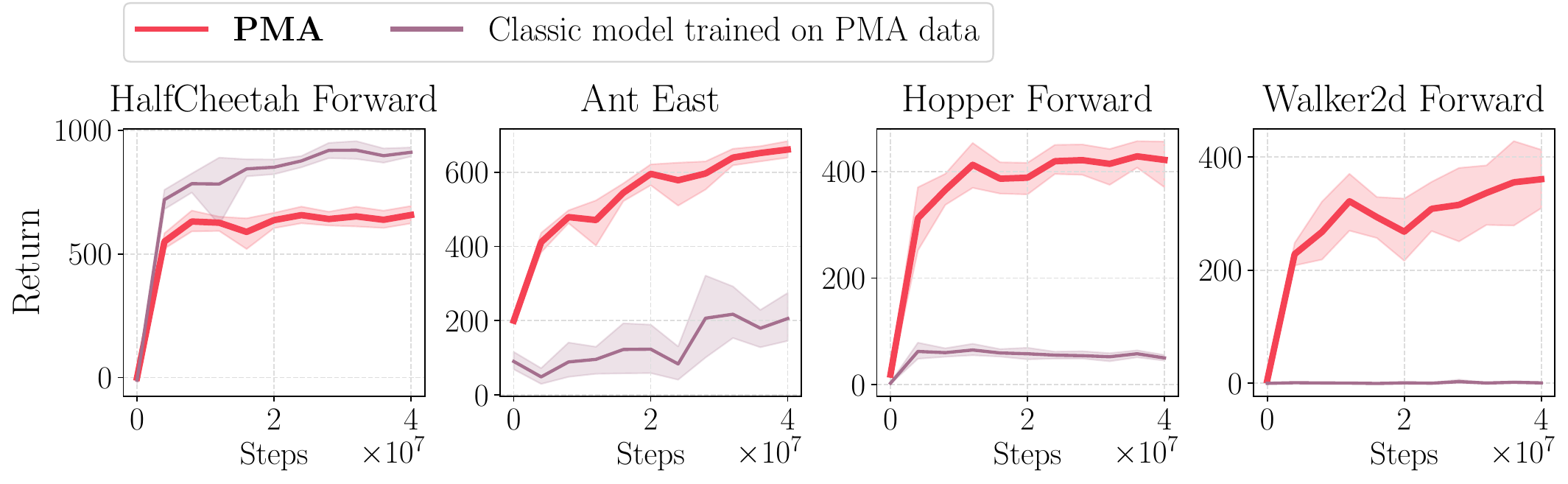}
    \vspace{-10pt}
    \caption{
    MPPI performance comparison between PMA and classic models trained with the data collected by PMA.
    While HalfCheetah does not benefit from action reparameterization,
    having a separate latent action space is crucial in the other environments.
    }
    \vspace{-10pt}
    \label{fig:am}
\end{figure}

\cutsubsectionup
\subsection{Analysis}
\cutsubsectiondown
\label{sec:analysis}

\textbf{Can model errors be reduced by simply increasing the model size?}
We first compare the average mean squared errors of predicted (normalized) states of the five methods
in HalfCheetah and Ant,
and report the results in \Cref{fig:mse}a.
In both environments, PMA exhibits the lowest model error, as it is trained to be maximally predictable.
To examine whether this degree of error can be achieved
by simply increasing the size of a classic model $\hat{p}(\vs'|\vs, \va)$,
we train classic models with random actions
using four different model sizes, ranging from two $256$-sized hidden layers to two $2048$-sized ones.
\Cref{fig:mse}b shows the results,
which suggest that there are virtually irreducible model errors
even in deterministic environments due to their complex, contact-rich dynamics.
On the other hand, PMA seeks to model not the entire MDP but only the ``predictable'' parts of the action space,
which reduces model errors and thus makes it amenable to model-based learning.

\textbf{Data restriction vs. action reparameterization.}
PMA serves both (\emph{i}) data restriction by not selecting unpredictable actions %
and (\emph{ii}) action reparameterization by having a separate latent action space.
To dissect these two effects, we additionally consider a classic model $\hat{p}(\vs'|\vs, \va)$
trained with the same data used to train PMA.
We compare the periodic MPPI performances of this setting and PMA in \Cref{fig:am}.
The results suggest that
while data restriction, in combination with the MOPO penalty, is sufficient in HalfCheetah,
having a separate latent action space is crucial in the other ``unstable'' environments
with early termination or irreversible states (\eg, flipping over in Ant).
This is because while the MOPO penalty at test time only prevents short-term deviations from the data distribution,
PMA trained with \emph{RL} considers long-term predictability,
which effectively prevents selecting seemingly benign actions that could eventually cause the agent to lose balance
(which corresponds to unpredictable behavior).

We refer to \pmalink and \Cref{sec:appx_qual} for qualitative results %
and \Cref{sec:appx_abl} for an ablation study. %

\cutsectionup
\section{Conclusion}
\cutsectiondown
We presented predictable MDP abstraction (PMA) as an unsupervised model-based method
that builds a latent MDP by reparameterizing the action space to only allow predictable actions.
We formulated its objective with information theory
and theoretically analyzed the suboptimality induced by the lossy training scheme.
Empirically, we confirmed that PMA enables robust model-based learning,
exhibiting significant performance improvements over prior approaches.

\textbf{Limitations.}
One limitation of PMA is that it is a lossy procedure.
While we empirically demonstrated that its improved predictability outweighs
the limitations imposed by the restriction of the action space in our experiments,
PMA might be suboptimal in tasks that require unpredictable or highly complex behaviors,
such those as discussed in \Cref{sec:when_pma},
and in general it may be difficult to guarantee that the abstractions learned by PMA are good
for every downstream task (though such guarantees are likely difficult to provide for any method).
Also, PMA requires tuning the coefficient $\beta$ to maintain a balance between predictability and the state-action space coverage. 
Nonetheless, we believe that methods that aim to specifically model the \emph{predictable} parts of an MDP
hold a lot of promise for future model-based RL methods.
Future work could explore hybridizing such techniques with model-free approaches
for handling the ``unpredictable'' parts,
further study effective data collection strategies
or methods that can utilize previously collected offline data,
and examine how such approaches could be scaled up to more complex and high-dimensional observation spaces,
where training directly for predictability could lead to even more significant gains in performance.

\section*{Acknowledgement}
We would like to thank Benjamin Eysenbach for insightful discussions about the initial idea,
Mitsuhiko Nakamoto, Jaekyeom Kim, and Youngwoon Lee for discussions about implementation and presentation,
and RAIL members and anonymous reviewers for their helpful comments.
Seohong Park was partly supported by Korea Foundation for Advanced Studies (KFAS).
This research used the Savio computational cluster resource provided by the Berkeley Research Computing program at UC Berkeley
and was partly supported by AFOSR FA9550-22-1-0273 and the Office of Naval Research.

\bibliography{pma}
\bibliographystyle{icml2023}

\newpage
\appendix
\onecolumn

\begin{figure}[t!]
    \centering
    \begin{subfigure}[ht]{0.32\linewidth}
        \centering
        \includegraphics[width=\linewidth]{figures/qual_ant.pdf}
        \caption{Ant}
    \end{subfigure}
    \begin{subfigure}[ht]{0.32\linewidth}
        \centering
        \includegraphics[width=\linewidth]{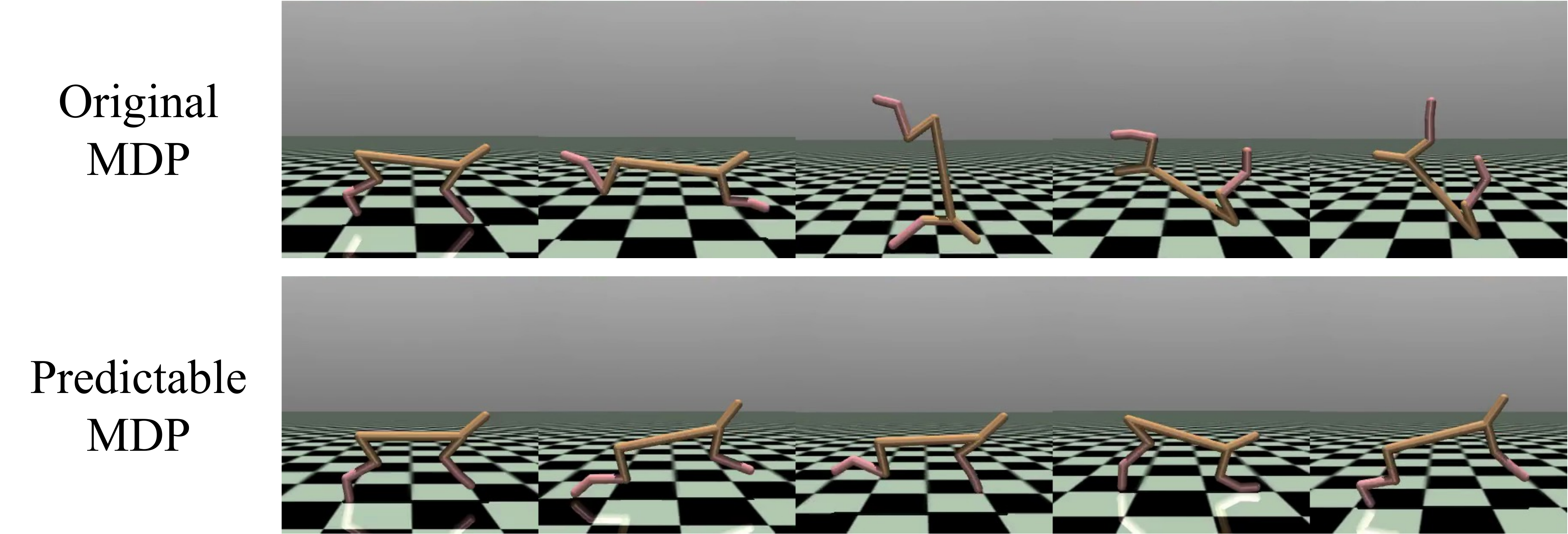}
        \caption{HalfCheetah}
    \end{subfigure}
    \begin{subfigure}[ht]{0.32\linewidth}
        \centering
        \includegraphics[width=\linewidth]{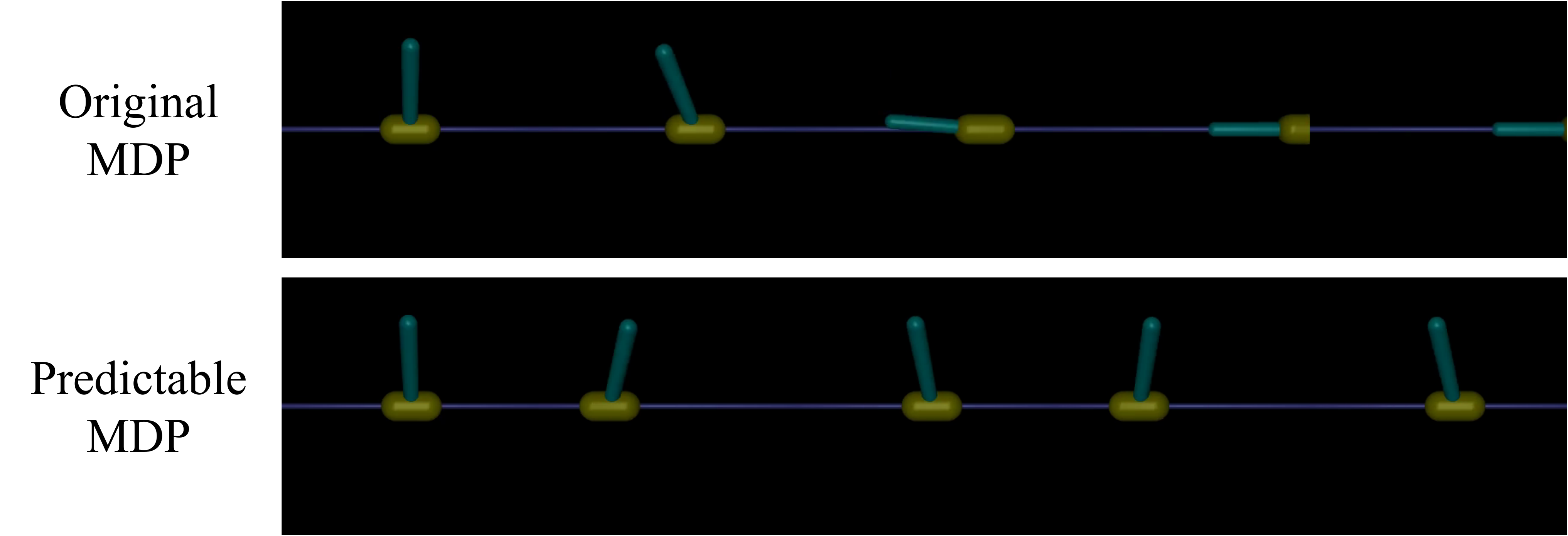}
        \caption{InvertedPendulum}
    \end{subfigure}
    \caption{
    Examples of PMA.
    PMA prevents unpredictable, chaotic actions so that every transition in the latent MDP is maximally predictable.
    Videos are available at \pmaaddress.
    }
    \label{fig:qual}
\end{figure}

\section{Examples of PMA}
\label{sec:appx_qual}
To illustrate the difference between original MDPs and the corresponding predictable MDPs,
we present qualitiative examples of PMA in \Cref{fig:qual}.
In Ant and HalfCheetah,
our predictable MDP only allows actions that are easy to model yet diverse enough to solve downstream tasks,
preventing unpredictable behaviors such as chaotic flipping.
In InvertedPendulum, we find that most of the learned latent actions move the agent in different directions
while maintaining balance,
even without early termination, in order to make the transitions maximally predictable.
Videos are available at \pmaaddress.

\begin{figure}[t!]
    \centering
    \includegraphics[width=0.6\linewidth]{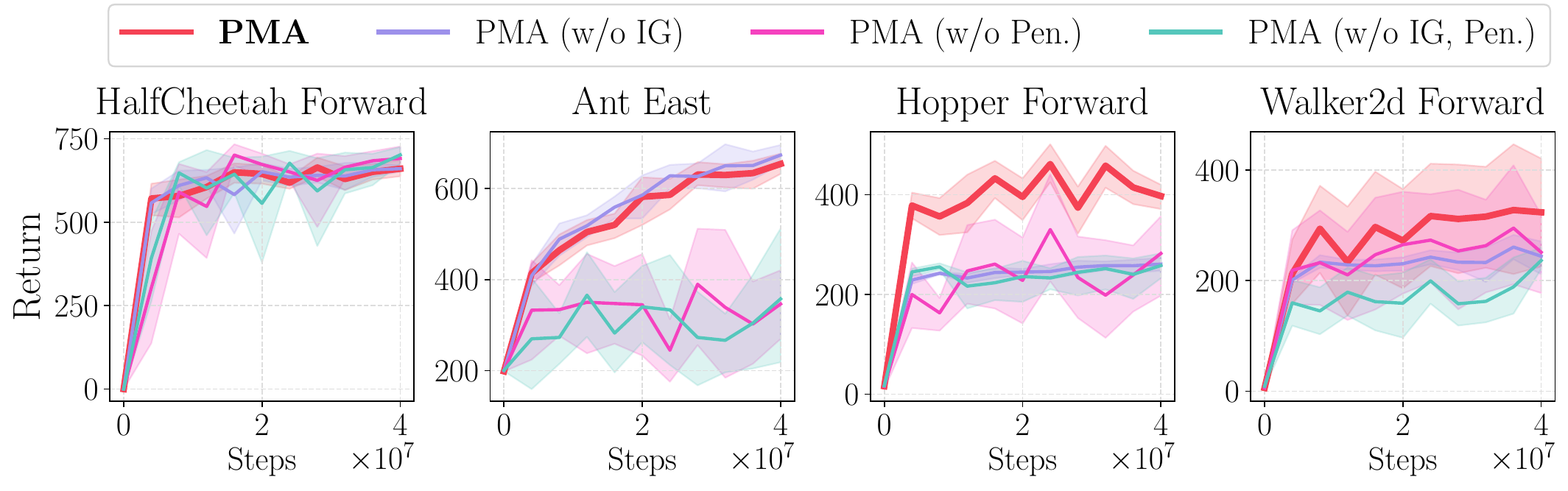}
    \caption{
    Ablation study of the disagreement bonus (``IG'') during unsupervised training
    and disagreement penalty (``Pen.'') during planning.
    The disagreement penalty generally stabilizes training, and the disagreement bonus improves performance.
    }
    \label{fig:abl}
\end{figure}

\section{Ablation study}
\label{sec:appx_abl}
To evaluate the relative importance of each component of PMA,
we ablate the information gain (disagreement bonus) term during unsupervised training
and the disagreement penalty during periodic MPPI planning,
and report performances in \Cref{fig:abl}.
While there are small performance differences between the settings in HalfCheetah,
these components improve and stabilize the performances in the other more complex environments
by encouraging exploration and preventing distributional shifts.

\section{Approximating Information Gain with Ensemble Disagreement}
\label{sec:appx_approx}

In this section,
we provide a justification for our use of ensemble disagreement as a way to approximate
the information gain term $I(\bm{S}';\bm{\Theta}|\bm{\gD}, \bm{Z})$ in \Cref{eq:pma_decomp}.
First, let the random variable $\hat{\bm{S}}' \sim \hat{p}_z(\cdot|\bm{S}, \bm{Z}; \bm{\Theta})$ denote
the \emph{predicted} state under a model with parameters $\bm{\Theta}$.
Since $\bm{S}' \to \bm{\Theta} \to \hat{\bm{S}}'$ forms a Markov chain conditioned on $\bm{\gD}$ and $\bm{Z}$ in our Bayesian setting,
we get the following lower bound by the data processing inequality:
\begin{align}
    &I(\bm{S}';\bm{\Theta}|\bm{\gD}, \bm{Z}) \geq I(\bm{S}';\hat{\bm{S}}'|\bm{\gD}, \bm{Z}) \\
    &= H(\hat{\bm{S}}'|\bm{\gD}, \bm{Z}) - H(\hat{\bm{S}}'|\bm{\gD}, \bm{Z}, \bm{S}'). \label{eq:infogain}
\end{align}
Now, we approximate the model posterior with 
an ensemble of $E$ predictive models, $\{\hat{p}_z(\vs'|\vs, \vz; \bm{\theta}_i)\}_{i \in [E]}$
with $p(\bm{\theta}|\gD) = \frac{1}{E} \sum_i \delta (\bm{\theta} - \bm{\theta}_i)$,
where each model is represented as a conditional Gaussian
with the mean given by a neural network and a unit diagonal covariance,
$\vs' \sim \gN (\bm{\mu}(\vs, \vz; \bm{\theta}_i), I)$.
The terms in \Cref{eq:infogain} measure the uncertainty in the predicted next state
before and after observing the outcome $\bm{S}'$, respectively.
Yet, it is still intractable because there is no closed-form formulation
for the differential entropy of a mixture of Gaussian distributions.
Hence, we further simplify these terms as follows.
First, we assume that the second term in \Cref{eq:infogain} has a negligible effect on the objective,
which roughly corresponds to assuming a low training error
(\ie, if we know the value of $\bm{S}'$ and we update the models, they should agree on $\hat{\bm{S}}'$, or at least have similar error).
This assumption might not hold in heteroskedastic environments but is otherwise very convenient.
Next, we empirically substitute the first term in \Cref{eq:infogain}
with the variance of the ensemble means with a coefficient $\beta$,
$\E [\beta \cdot \mathrm{Tr}[\sV_i[\bm{\mu}(\vs, \vz; \bm{\theta}_i)]]]$,
based on the fact that they both are correlated to the uncertainty in $\hat{\bm{S}}'$ \citep{p2e_sekar2020}:
if the predictions of the ensemble models are very different from one another (\ie, the variance is large),
the marginal entropy of $\hat{\bm{S}}'$ will also be large, and vice versa.
As a result, we get our approximation in \Cref{eq:disag}.
We also refer to prior works \citep{max_shyam2019,p2e_sekar2020,rp1_ball2020,disdain_strouse2022}
for similar connections between ensemble disagreement and information gain.

\section{Theoretical Results}
\label{sec:appx_theory}

\subsection{Technical Lemmas}
For an MDP $\gM := (\gS, \gA, \mu, p, r)$\footnote{In our theoretical analyses, we assume that the state and action spaces are finite for simplicity.}
and a policy $\pi$,
we first define the discounted state and state-action distributions, and state the bellman flow constraint lemma.
\begin{definition}
(Discounted state distribution) $d^\pi(\vs) := (1-\gamma) \sum_{t=0}^{\infty} \gamma^t P(\vs_t = \vs|\mu, p, \pi)$.
\end{definition}
\begin{definition}
(Discounted state-action distribution) $d^\pi(\vs, \va) := \pi(\va|\vs) d^\pi(\vs)$.
\end{definition}
\begin{lemma} \label{thm:flow} (Bellman flow constraint)
\begin{align}
    d^\pi(\vs) = (1-\gamma)\mu(\vs) + \gamma \sum_{\vs^- \in \gS, \va^- \in \gA} p(\vs|\vs^-, \va^-) d^\pi(\vs^-, \va^-).
\end{align}
\end{lemma}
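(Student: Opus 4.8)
The plan is to unfold the geometric series that defines $d^\pi(\vs)$, peel off the $t=0$ term, and then substitute the one-step transition recursion of the Markov chain induced by $\pi$ to re-expose a shifted copy of the same series, which reassembles into $d^\pi$ on the right-hand side.

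First I would write, directly from the definition, $d^\pi(\vs) = (1-\gamma)\sum_{t=0}^{\infty} \gamma^t P_t(\vs)$, where I abbreviate $P_t(\vs) := P(\vs_t = \vs \mid \mu, p, \pi)$. The $t=0$ term contributes exactly $(1-\gamma)\mu(\vs)$, since the chain starts from the initial distribution, i.e.\ $P_0 = \mu$. For the remaining tail $(1-\gamma)\sum_{t=1}^{\infty}\gamma^t P_t(\vs)$ I would reindex via $t \mapsto k+1$ to obtain $\gamma (1-\gamma)\sum_{k=0}^{\infty}\gamma^k P_{k+1}(\vs)$, pulling out one factor of $\gamma$.

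Next I would substitute the one-step forward recursion $P_{k+1}(\vs) = \sum_{\vs^- \in \gS, \va^- \in \gA} p(\vs|\vs^-, \va^-)\,\pi(\va^-|\vs^-)\,P_k(\vs^-)$, which is just the law of total probability for the next state given the current state-action pair. Interchanging the (finite, since $\gS$ and $\gA$ are assumed finite) inner sum over $(\vs^-, \va^-)$ with the outer sum over $k$, I can collect the factor $(1-\gamma)\sum_{k=0}^{\infty}\gamma^k P_k(\vs^-)$, which is by definition $d^\pi(\vs^-)$. This leaves $(1-\gamma)\mu(\vs) + \gamma \sum_{\vs^-, \va^-} p(\vs|\vs^-, \va^-)\,\pi(\va^-|\vs^-)\,d^\pi(\vs^-)$, and recognizing $\pi(\va^-|\vs^-)\,d^\pi(\vs^-) = d^\pi(\vs^-, \va^-)$ via the definition of the discounted state-action distribution yields exactly the claimed identity.

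The only point requiring care is the interchange of the two summations. Because every factor ($\gamma^k$, the transition kernel, the policy, and $P_k$) is nonnegative and the geometric series converges for $\gamma \in [0,1)$, Tonelli's theorem justifies the reordering, so no subtle convergence issue arises; the finiteness of the state and action spaces reduces this step to routine bookkeeping rather than a genuine obstacle.
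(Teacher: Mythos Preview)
Your proposal is correct and follows essentially the same route as the paper: unfold the geometric series defining $d^\pi$, separate the $t=0$ term as $(1-\gamma)\mu(\vs)$, factor out $\gamma$ from the tail, apply the one-step transition recursion, and recognize the resulting series as $d^\pi(\vs^-,\va^-)$. Your explicit invocation of Tonelli for the interchange is a minor refinement the paper omits, but the argument is otherwise identical.
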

\begin{proof}
\begin{align}
    d^\pi(\vs)
    &= (1-\gamma) (P(\vs_0 = \vs) + \gamma P(\vs_1 = \vs) + \gamma^2 P(\vs_2 = \vs) + \cdots) \\
    &= (1-\gamma) \mu(\vs) + \gamma (1-\gamma) (P(\vs_1 = \vs) + \gamma P(\vs_2 = \vs) + \cdots) \\
    &= (1-\gamma) \mu(\vs) + \gamma (1-\gamma) \sum_{\vs^-, \va^-} p(\vs|\vs^-, \va^-) (P(\vs_0 = \vs^-, \va_0 = \va^-)
        + \gamma P(\vs_1 = \vs^-, \va_1 = \va^-) + \cdots) \\
    &= (1-\gamma) \mu(\vs) + \gamma \sum_{\vs^-, \va^-} p(\vs|\vs^-, \va^-) d^\pi(\vs^-, \va^-).
\end{align}
\end{proof}

Now, we consider two MDPs with different transition dynamics, $\gM_1 := (\gS, \gA, r, \mu, p_1)$ and $\gM_2 := (\gS, \gA, r, \mu, p_2)$,
and two policies, $\pi_1$ and $\pi_2$.
We denote the expected return of $\pi$ as $J_\gM(\pi) := \frac{1}{1-\gamma}\E_{(\vs, \va) \sim d^\pi(\vs, \va)}[r(\vs, \va)]$
and the maximum reward as $R := \max_{\vs \in \gS, \va \in \gA} r(\vs, \va)$.
We can bound their performance difference as follows.
\begin{lemma}
\label{thm:perfdiff}
If the total variation distances of the dynamics and the policies are bounded as
\begin{align}
    \E_{(\vs, \va) \sim d_1^{\pi_1}(\vs, \va)} [\TV (p_1(\cdot|\vs, \va) \| p_2(\cdot|\vs, \va))] &\leq \epsilon_m, \label{eq:epsilon_m} \\
    \E_{\vs \sim d_1^{\pi_1}(\vs)} [\TV (\pi_1(\cdot|\vs) \| \pi_2(\cdot|\vs))] &\leq \epsilon_\pi, \label{eq:epsilon_pi}
\end{align}
their performance difference satisfies the following inequality:
\begin{align}
    \left|J_{\gM_1}(\pi_1) - J_{\gM_2}(\pi_2)\right| \leq \frac{R}{(1-\gamma)^2} (2 \gamma \epsilon_m + 2 \epsilon_\pi).
    \label{eq:perfdiff}
\end{align}
\end{lemma}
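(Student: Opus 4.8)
The plan is to bound the performance difference $|J_{\gM_1}(\pi_1) - J_{\gM_2}(\pi_2)|$ by controlling the discrepancy between the two discounted state-action distributions $d_1^{\pi_1}(\vs,\va)$ and $d_2^{\pi_2}(\vs,\va)$, since $J_\gM(\pi)$ is linear in $d^\pi$. Concretely, I would first write
\begin{align}
    |J_{\gM_1}(\pi_1) - J_{\gM_2}(\pi_2)|
    = \frac{1}{1-\gamma}\left| \sum_{\vs,\va} \big(d_1^{\pi_1}(\vs,\va) - d_2^{\pi_2}(\vs,\va)\big) r(\vs,\va) \right|
    \leq \frac{R}{1-\gamma}\, \|d_1^{\pi_1} - d_2^{\pi_2}\|_1, \nonumber
\end{align}
so the whole problem reduces to showing $\|d_1^{\pi_1} - d_2^{\pi_2}\|_1 \leq \frac{2}{1-\gamma}(\gamma \epsilon_m + \epsilon_\pi)$ (using $\|\cdot\|_1 = 2\TV$).

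To get the $L^1$ bound on the occupancy measures, I would use the Bellman flow constraint (\Cref{thm:flow}) for $d_1^{\pi_1}$ and $d_2^{\pi_2}$ and subtract them. Writing $\delta(\vs) := |d_1^{\pi_1}(\vs) - d_2^{\pi_2}(\vs)|$ summed appropriately, the recursion gives something of the form $\|d_1^{\pi_1} - d_2^{\pi_2}\|_1 \leq \gamma \sum_{\vs^-,\va^-} \|p_1(\cdot|\vs^-,\va^-) d_1^{\pi_1}(\vs^-,\va^-) - p_2(\cdot|\vs^-,\va^-) d_2^{\pi_2}(\vs^-,\va^-)\|_1$. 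I would split the inner term by the triangle inequality into a ``dynamics'' part $p_1 - p_2$ weighted by $d_1^{\pi_1}$, and a ``policy + state-distribution'' part $p_2$ weighted by $d_1^{\pi_1} - d_2^{\pi_2}$; the latter collapses (since $p_2$ sums to one over next states) to $\gamma \|d_1^{\pi_1}(\vs,\va) - d_2^{\pi_2}(\vs,\va)\|_1$ at the state-action level. This yields a self-referential inequality $x \leq 2\gamma \epsilon_m + \gamma x + (\text{policy term})$, where the policy term bounds $\|d_1^{\pi_1}(\vs,\va) - d_2^{\pi_2}(\vs,\va)\|_1$ in terms of $\|d_1^{\pi_1}(\vs) - d_2^{\pi_2}(\vs)\|_1$ plus $2\epsilon_\pi$ (using $d^\pi(\vs,\va) = \pi(\va|\vs)d^\pi(\vs)$ and again the triangle inequality). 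Solving the resulting geometric recursion gives the factor $\frac{1}{1-\gamma}$, and combining with the $\frac{R}{1-\gamma}$ prefactor yields the claimed $\frac{R}{(1-\gamma)^2}(2\gamma\epsilon_m + 2\epsilon_\pi)$.

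The one subtlety I would be careful about is that the total-variation hypotheses in \Cref{eq:epsilon_m} and \Cref{eq:epsilon_pi} are stated as \emph{expectations under $d_1^{\pi_1}$}, not as uniform (sup-norm) bounds. So when I split the flow recursion I must make sure the $p_1 - p_2$ term is always weighted by $d_1^{\pi_1}$ (the distribution under which the error is controlled), and the leftover term involving $d_1^{\pi_1} - d_2^{\pi_2}$ never carries a $p_1 - p_2$ factor — a standard but easy-to-botch bookkeeping step. Similarly, the policy-difference term must be isolated with $\pi_1$ vs.\ $\pi_2$ evaluated against $d_1^{\pi_1}(\vs)$ so that \Cref{eq:epsilon_pi} applies directly. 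The main obstacle is thus not any deep idea but keeping the asymmetry between the two MDPs straight throughout the telescoping/recursion so that every error term is measured against the reference distribution $d_1^{\pi_1}$; once the recursion is set up correctly the rest is a routine geometric-series summation. This is essentially the simulation-lemma / telescoping argument used in \citet{mbpo_janner2019}, adapted to allow both the dynamics and the policy to differ simultaneously.
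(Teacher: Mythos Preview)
Your proposal is correct and follows essentially the same route as the paper's proof: bound $|J_{\gM_1}(\pi_1)-J_{\gM_2}(\pi_2)|$ by $\tfrac{R}{1-\gamma}\|d_1^{\pi_1}-d_2^{\pi_2}\|_1$, then use the Bellman flow constraint together with two triangle-inequality splits (policy against $d_1^{\pi_1}(\vs)$, dynamics against $d_1^{\pi_1}(\vs,\va)$) to obtain the self-referential inequality $x \le 2\epsilon_\pi + 2\gamma\epsilon_m + \gamma x$ and solve the geometric recursion. Your emphasis on keeping every error term weighted by $d_1^{\pi_1}$ so that the expectation-based hypotheses in \Cref{eq:epsilon_m}--\Cref{eq:epsilon_pi} apply is exactly the care the paper's derivation takes.
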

\begin{proof}
We first bound the difference in their discounted state-action distributions.
\begin{align}
    &\sum_{\vs \in \gS, \va \in \gA} \left|d_1^{\pi_1}(\vs, \va) - d_2^{\pi_2}(\vs, \va)\right| \\
    &= \sum_{\vs \in \gS, \va \in \gA} \left|\pi_1(\va|\vs) d_1^{\pi_1}(\vs) - \pi_2(\va|\vs) d_2^{\pi_2}(\vs)\right| \\
    &\leq \sum_{\vs \in \gS, \va \in \gA} \left|\pi_1(\va|\vs) d_1^{\pi_1}(\vs) - \pi_2(\va|\vs) d_1^{\pi_1}(\vs)\right|
        + \sum_{\vs \in \gS, \va \in \gA} \left|\pi_2(\va|\vs) d_1^{\pi_1}(\vs) - \pi_2(\va|\vs) d_2^{\pi_2}(\vs)\right| \\
    &= \sum_{\vs \in \gS} d_1^{\pi_1}(\vs) \sum_{\va \in \gA} \left|\pi_1(\va|\vs) - \pi_2(\va|\vs)\right|
        + \sum_{\vs \in \gS} \left|d_1^{\pi_1}(\vs) - d_2^{\pi_2}(\vs)\right| \\
    &\leq \sum_{\vs \in \gS} \left|d_1^{\pi_1}(\vs) - d_2^{\pi_2}(\vs)\right| + 2 \epsilon_\pi \\
    &= \gamma \sum_{\vs \in \gS} \left|\sum_{\vs^- \in \gS, \va^- \in \gA} \left(
        p_1(\vs|\vs^-, \va^-)d_1^{\pi_1}(\vs^-, \va^-) - p_2(\vs|\vs^-, \va^-)d_2^{\pi_2}(\vs^-, \va^-)
    \right)\right| + 2 \epsilon_\pi \label{eq:perfdiff_eq1} \\
    &\leq \gamma \sum_{\vs^- \in \gS, \va^- \in \gA, \vs \in \gS} \left|
        p_1(\vs|\vs^-, \va^-)d_1^{\pi_1}(\vs^-, \va^-) - p_2(\vs|\vs^-, \va^-)d_2^{\pi_2}(\vs^-, \va^-)
    \right| + 2 \epsilon_\pi \\
    &\leq \gamma \sum_{\vs^- \in \gS, \va^- \in \gA, \vs \in \gS} \left|
        p_1(\vs|\vs^-, \va^-)d_1^{\pi_1}(\vs^-, \va^-) - p_2(\vs|\vs^-, \va^-)d_1^{\pi_1}(\vs^-, \va^-)
    \right| \nonumber \\
    &\quad + \gamma \sum_{\vs^- \in \gS, \va^- \in \gA, \vs \in \gS} \left|
        p_2(\vs|\vs^-, \va^-)d_1^{\pi_1}(\vs^-, \va^-) - p_2(\vs|\vs^-, \va^-)d_2^{\pi_2}(\vs^-, \va^-)
    \right| + 2 \epsilon_\pi \\
    &= \gamma \sum_{\vs^- \in \gS, \va^- \in \gA} d_1^{\pi_1}(\vs^-, \va^-) \sum_{\vs \in \gS}
        \left| p_1(\vs|\vs^-, \va^-) - p_2(\vs|\vs^-, \va^-) \right| \nonumber \\
    &\quad+ \gamma \sum_{\vs^- \in \gS, \va^- \in \gA} \left| d_1^{\pi_1}(\vs^-, \va^-) - d_2^{\pi_2}(\vs^-, \va^-) \right|
        + 2 \epsilon_\pi \\
    &\leq 2 \gamma \epsilon_m + 2 \epsilon_\pi + \gamma \sum_{\vs^- \in \gS, \va^- \in \gA}
        \left| d_1^{\pi_1}(\vs^-, \va^-) - d_2^{\pi_2}(\vs^-, \va^-) \right| \\
    &= 2 \gamma \epsilon_m + 2 \epsilon_\pi + \gamma \sum_{\vs \in \gS, \va \in \gA}
        \left| d_1^{\pi_1}(\vs, \va) - d_2^{\pi_2}(\vs, \va) \right|,
\end{align}
which implies
\begin{align}
    \sum_{\vs \in \gS, \va \in \gA} \left|d_1^{\pi_1}(\vs, \va) - d_2^{\pi_2}(\vs, \va)\right|
    \leq \frac{1}{1-\gamma}(2 \gamma \epsilon_m + 2 \epsilon_\pi),
\end{align}
where we use \Cref{thm:flow} in \Cref{eq:perfdiff_eq1}.
Hence, we obtain
\begin{align}
    \left|J_{\gM_1}(\pi_1) - J_{\gM_2}(\pi_2)\right|
    &= \frac{1}{1-\gamma} \left| \sum_{\vs \in \gS, \va \in \gA} (d_1^{\pi_1}(\vs, \va) - d_2^{\pi_2}(\vs, \va)) r(\vs, \va) \right| \\
    &\leq \frac{R}{1-\gamma} \sum_{\vs \in \gS, \va \in \gA} |d_1^{\pi_1}(\vs, \va) - d_2^{\pi_2}(\vs, \va)| \\
    &\leq \frac{R}{(1-\gamma)^2} (2 \gamma \epsilon_m + 2 \epsilon_\pi).
\end{align}
\end{proof}
\Cref{eq:perfdiff} is the same bound as Lemma B.3 in \citet{mbpo_janner2019}, but we use a milder assumption in \Cref{eq:epsilon_pi},
which only assumes that the expectation (not the maximum) of the total variation distance between the policies is bounded.

\subsection{MBRL Performance Bound}
\label{sec:appx_mbrl_bound}

We first present the performance bound of a policy $\pi$ in the original MDP $\gM = (\gS, \gA, \mu, p, r)$ and its model-based MDP \citep{mbpo_janner2019}.
We denote the model-based MDP with a learned predictive model $\hat{p}$ as $\hat{\gM} = (\gS, \gA, \mu, \hat{p}, r)$.
We assume that the model $\hat{p}$ is trained on a dataset $\gD$, which is collected by a data-collecting policy $\pi_\gD$.
\begin{theorem}
\label{thm:model_bound}
For any policy $\pi$,
if the total variation distances between (\emph{i}) the true dynamics $p$ and the learned model $\hat{p}$
and (\emph{ii}) the policy $\pi$ and the data-collection policy $\pi_\gD$ are bounded as
\begin{align}
    \E_{(\vs, \va) \sim d^{\pi_\gD}(\vs, \va)} [\TV (p(\cdot|\vs, \va) \| \hat{p}(\cdot|\vs, \va))] &\leq \epsilon_m, \\
    \E_{\vs \sim d^{\pi_\gD}(\vs)} [\TV (\pi(\cdot|\vs) \| \pi_\gD(\cdot|\vs))] &\leq \epsilon_\pi,
\end{align}
the performance difference of $\pi$ between $\gM$ and $\hat{\gM}$ satisfies the following inequality:
\begin{align}
    |J_{\gM}(\pi) - J_{\hat{\gM}}(\pi)| \leq \frac{R}{(1-\gamma)^2}(4 \epsilon_\pi + 2\gamma \epsilon_m).
\end{align}
\end{theorem}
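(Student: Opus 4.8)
The plan is to reduce the statement to two applications of \Cref{thm:perfdiff} connected by the triangle inequality, with the data-collection policy $\pi_\gD$ serving as the intermediate object:
\begin{align}
  |J_{\gM}(\pi) - J_{\hat{\gM}}(\pi)|
  \leq |J_{\gM}(\pi) - J_{\gM}(\pi_\gD)| + |J_{\gM}(\pi_\gD) - J_{\hat{\gM}}(\pi)|.
\end{align}
The reason for routing through $J_{\gM}(\pi_\gD)$ (rather than, say, $J_{\hat{\gM}}(\pi_\gD)$) is that the hypotheses of the theorem control the relevant total variation distances only under the state and state-action distributions $d^{\pi_\gD}$ induced in the \emph{true} MDP $\gM$. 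Since the assumption of \Cref{thm:perfdiff} is stated with respect to $d_1^{\pi_1}$, every invocation of that lemma must therefore take $(\gM_1, \pi_1) = (\gM, \pi_\gD)$, so that its hypotheses \Cref{eq:epsilon_m} and \Cref{eq:epsilon_pi} coincide exactly with what we are given.

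For the first term I would apply \Cref{thm:perfdiff} with $p_1 = p_2 = p$ (identical dynamics, hence the model-error term vanishes), $\pi_1 = \pi_\gD$, and $\pi_2 = \pi$. The required policy bound, $\E_{\vs \sim d^{\pi_\gD}(\vs)}[\TV(\pi_\gD(\cdot|\vs) \| \pi(\cdot|\vs))] \leq \epsilon_\pi$, is precisely the second hypothesis of the theorem once we use the symmetry of total variation distance; this gives $|J_{\gM}(\pi) - J_{\gM}(\pi_\gD)| \leq \frac{R}{(1-\gamma)^2} \cdot 2\epsilon_\pi$. For the second term I would apply \Cref{thm:perfdiff} with $p_1 = p$, $p_2 = \hat{p}$, $\pi_1 = \pi_\gD$, $\pi_2 = \pi$; now both hypotheses of the theorem are used verbatim, yielding $|J_{\gM}(\pi_\gD) - J_{\hat{\gM}}(\pi)| \leq \frac{R}{(1-\gamma)^2}(2\gamma \epsilon_m + 2\epsilon_\pi)$. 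Adding the two bounds produces $\frac{R}{(1-\gamma)^2}(4\epsilon_\pi + 2\gamma \epsilon_m)$, exactly as claimed.

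There is essentially no computational obstacle here: \Cref{thm:perfdiff} already does the heavy lifting via the Bellman-flow telescoping in its proof, so the argument is purely a bookkeeping exercise. The one point that genuinely requires care — and the one I would flag — is the choice of intermediate term. The naive split $|J_{\gM}(\pi) - J_{\hat{\gM}}(\pi_\gD)| + |J_{\hat{\gM}}(\pi_\gD) - J_{\hat{\gM}}(\pi)|$ fails, because its second piece would require a policy-divergence bound taken under the state distribution of $\pi_\gD$ \emph{in the learned model} $\hat{\gM}$, which is not among our assumptions; routing through $J_{\gM}(\pi_\gD)$ sidesteps this mismatch. I would also remark that, as in MBPO's analysis, both $\epsilon_\pi$ contributions come from comparing against $\pi_\gD$, so under on-policy data ($\pi = \pi_\gD$, whence $\epsilon_\pi = 0$) the bound collapses to $\frac{R}{(1-\gamma)^2} \cdot 2\gamma \epsilon_m$.
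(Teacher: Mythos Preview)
Your proposal is correct and matches the paper's proof essentially line for line: the paper also splits via the triangle inequality through $J_{\gM}(\pi_\gD)$ and applies \Cref{thm:perfdiff} twice, once with identical dynamics to get the $2\epsilon_\pi$ term and once with $(p,\hat{p})$ and $(\pi_\gD,\pi)$ to get the $2\epsilon_\pi + 2\gamma\epsilon_m$ term. Your added justification for why the intermediate point must be $J_{\gM}(\pi_\gD)$ rather than $J_{\hat{\gM}}(\pi_\gD)$ is a helpful clarification that the paper leaves implicit.
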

\begin{proof}
From \Cref{thm:perfdiff}, we get
\begin{align}
    |J_{\gM}(\pi) - J_{\hat{\gM}}(\pi)|
    &\leq |J_{\gM}(\pi) - J_{\gM}(\pi_\gD)| + |J_{\gM}(\pi_\gD) + J_{\hat{\gM}}(\pi)| \\
    &\leq \frac{R}{(1-\gamma)^2}(2 \epsilon_\pi) + \frac{R}{(1-\gamma)^2}(2 \epsilon_\pi + 2\gamma \epsilon_m) \\
    &= \frac{R}{(1-\gamma)^2}(4 \epsilon_\pi + 2\gamma \epsilon_m).
\end{align}
\end{proof}

\subsection{PMA Performance Bound}
\label{sec:appx_pma_bound}

We provide the performance bound of our predictable MDP abstraction.
We denote our predictable latent MDP as $\gM_P := (\gS, \gZ, \mu, p_z, r)$ with the latent action space $\gZ$
and the reward function $r(\vs, \vz) = \sum_a \pi_z(\va|\vs, \vz) r(\vs, \va)$,
and its model-based MDP as $\hat{\gM}_P := (\gS, \gZ, \mu, \hat{p}_z, r)$.
We assume that the model $\hat{p}_z$ is trained on a dataset $\gD_P$ collected by a data-collecting policy $\pi_{\gD_P}$ in the latent MDP.

For the performance bound of a policy $\pi(\va|\vs)$ between the original MDP and the predictable model-based MDP,
we independently tackle the performance losses caused by (\emph{i}) MDP abstraction and (\emph{ii}) model learning.
For the first part, we take a similar approach to \citet{nearoptimal_nachum2019,opal_ajay2021}.
For any $\vs \in \gS$, $\va \in \gA$, and a probability distribution $\phi: \gS \times \gA \to \gP(\gZ)$,
we define the following state distribution in $\gS \times \gA \to \gP(\gS)$:
\begin{align}
    p_z^\phi(\cdot|\vs, \va) := \sum_\vz \phi(\vz|\vs, \va) p_z(\cdot|\vs, \vz).
\end{align}
Also, we define the optimal $\vz$ distribution that best mimics the original transition distribution:
\begin{align}
    \phi^*(\cdot|\vs, \va) := \argmin_{\phi \in \gS \times \gA \to \gP(\gZ)} \TV(p(\cdot|\vs, \va) \| p_z^\phi(\cdot|\vs, \va)).
\end{align}
For a policy $\pi$ in the original MDP, we define its corresponding optimal latent policy as follows:
\begin{align}
    \pi_z^{\phi^*}(\cdot|\vs) := \sum_{\va \in \gA} \pi(\va|\vs) \phi^*(\cdot|\vs, \va).
\end{align}
Intuitively, this policy produces latent action distributions that mimic the next state distributions of $\pi$ as closely as possible.
Now, we state the performance bound of $\pi$ between the original MDP and the predictable latent MDP.
\begin{lemma}
\label{thm:abs_bound}
For any policy $\pi$,
if the total variation distance between the original transition dynamics and the optimal latent dynamics is bounded as
\begin{align}
    \E_{(\vs, \va) \sim d^{\pi}(\vs, \va)} [\TV (p(\cdot|\vs, \va) \| p_z^{\phi^*}(\cdot|\vs, \va))] &\leq \epsilon_a,
\end{align}
the performance difference between the original MDP and the predictable latent MDP satisfies the following inequality:
\begin{align}
    |J_\gM(\pi) - J_{\gM_P}(\pi_z^{\phi^*})| \leq \frac{2R \gamma \epsilon_a}{(1 - \gamma)^2}. \label{eq:abs_bound}
\end{align}
\end{lemma}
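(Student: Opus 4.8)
The plan is to reduce \Cref{thm:abs_bound} to the previously proven \Cref{thm:perfdiff} by viewing the original MDP and the predictable latent MDP as two ``MDPs'' with a shared effective state-action flow. Concretely, I would set up a correspondence between running $\pi$ in $\gM$ and running $\pi_z^{\phi^*}$ in $\gM_P$: in the latent MDP, the effective next-state distribution from state $\vs$ under $\pi_z^{\phi^*}$ is $\sum_\vz \pi_z^{\phi^*}(\vz|\vs) p_z(\cdot|\vs,\vz) = \sum_\va \pi(\va|\vs) \sum_\vz \phi^*(\vz|\vs,\va) p_z(\cdot|\vs,\vz) = \sum_\va \pi(\va|\vs) p_z^{\phi^*}(\cdot|\vs,\va)$, whereas in $\gM$ it is $\sum_\va \pi(\va|\vs) p(\cdot|\vs,\va)$. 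So, from the point of view of the state-visitation recursion (\Cref{thm:flow}), the two processes differ only in that $p(\cdot|\vs,\va)$ is replaced by $p_z^{\phi^*}(\cdot|\vs,\va)$, and the ``policy'' over $\va$ is identical. The reward bookkeeping matches as well, since $r(\vs,\vz) = \sum_\va \pi_z(\va|\vs,\vz) r(\vs,\va)$ is defined precisely so that $\E_{d^{\pi_z^{\phi^*}}_{\gM_P}}[r(\vs,\vz)]$ collapses to an expectation of $r(\vs,\va)$ under the matched state-action flow.

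The key steps, in order, would be: (1) State the two-MDP setup where both have action space $\gA$ (treating the latent policy as acting through $p_z^{\phi^*}$), so that the only discrepancy is in the transition kernel ($\epsilon_m \leftarrow \epsilon_a$) and the policy discrepancy is zero ($\epsilon_\pi = 0$). (2) Verify that the discounted state-action distributions in this reformulation coincide with $d^\pi(\vs,\va)$ (original) and the $\va$-marginalized version of $d^{\pi_z^{\phi^*}}(\vs,\vz)$ (latent), so the expected returns $J_\gM(\pi)$ and $J_{\gM_P}(\pi_z^{\phi^*})$ are exactly the returns of the two reformulated MDPs. (3) Check the hypothesis of \Cref{thm:perfdiff}: the assumed bound $\E_{(\vs,\va)\sim d^\pi}[\TV(p(\cdot|\vs,\va)\|p_z^{\phi^*}(\cdot|\vs,\va))] \le \epsilon_a$ is exactly \Cref{eq:epsilon_m} with $\epsilon_m = \epsilon_a$, and the policy-difference assumption \Cref{eq:epsilon_pi} holds trivially with $\epsilon_\pi = 0$ since the ``policies over $\va$'' are literally the same. (4) Apply \Cref{thm:perfdiff} to conclude $|J_\gM(\pi) - J_{\gM_P}(\pi_z^{\phi^*})| \le \frac{R}{(1-\gamma)^2}(2\gamma\epsilon_a + 0) = \frac{2R\gamma\epsilon_a}{(1-\gamma)^2}$, which is \Cref{eq:abs_bound}.

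I expect the main obstacle to be a careful-but-routine verification in step (2): making rigorous the claim that the state-visitation distribution of $\pi_z^{\phi^*}$ in $\gM_P$, when pushed through $\pi_z(\va|\vs,\vz)$ to a distribution over $(\vs,\va)$, satisfies the same Bellman flow recursion (\Cref{thm:flow}) as a policy with transition kernel $p_z^{\phi^*}$ — i.e. that the latent variable $\vz$ can be marginalized out consistently at every time step. This is essentially a bookkeeping argument that the joint chain over $(\vs, \vz)$ projects to a Markov chain over $\vs$ with kernel $\sum_\vz \pi_z^{\phi^*}(\vz|\vs) p_z(\cdot|\vs,\vz)$, and that the induced state-action occupancy over $(\vs,\va)$ is the right object; once this is set up cleanly, everything else is a direct citation of \Cref{thm:perfdiff}. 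A secondary (minor) point to get right is that the reward equivalence $J_{\gM_P}(\pi_z^{\phi^*}) = \frac{1}{1-\gamma}\E_{(\vs,\va)}[r(\vs,\va)]$ under the matched occupancy measure follows from the definition $r(\vs,\vz) = \sum_\va \pi_z(\va|\vs,\vz) r(\vs,\va)$, so that the bound in terms of $\TV$ of state-action occupancies translates directly into a bound on return differences exactly as in the proof of \Cref{thm:perfdiff}.
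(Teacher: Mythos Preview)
Your proposal is correct and follows essentially the same approach as the paper. The paper defines an auxiliary MDP $\gM^{\phi^*} := (\gS, \gA, \mu, p_z^{\phi^*}, r)$, shows via the same computation you outline that the next-state distribution of $\pi_z^{\phi^*}$ in $\gM_P$ equals $\sum_\va \pi(\va|\vs) p_z^{\phi^*}(\cdot|\vs,\va)$ so that $J_{\gM_P}(\pi_z^{\phi^*}) = J_{\gM^{\phi^*}}(\pi)$, and then applies \Cref{thm:perfdiff} with $\epsilon_\pi = 0$ and $\epsilon_m = \epsilon_a$ --- exactly your steps (1)--(4).
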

\begin{proof}
The next state distribution of $\pi_z^{\phi^*}$ at state $\vs \in \gS$ in the latent MDP can be written as follows:
\begin{align}
    p_z(\cdot|\vs)
    &= \sum_{\vz \in \gZ} \pi_z^{\phi^*}(\vz|\vs) p_z(\cdot|\vs, \vz) \\
    &= \sum_{\vz \in \gZ} \sum_{\va \in \gA} \pi(\va|\vs) \phi^*(\vz|\vs, \va) p_z(\cdot|\vs, \vz) \\
    &= \sum_{\va \in \gA} \pi(\va|\vs) \sum_{\vz \in \gZ} \phi^*(\vz|\vs, \va) p_z(\cdot|\vs, \vz) \\
    &= \sum_{\va \in \gA} \pi(\va|\vs) p_z^{\phi^*}(\cdot|\vs, \va).
\end{align}
Hence, $J_{\gM_P}(\pi_z^{\phi^*})$ is equal to $J_{\gM^{\phi^*}}(\pi)$,
where $\gM^{\phi^*}$ is defined as $(\gS, \gA, \mu, p_z^{\phi^*}, r)$.
Then, \Cref{eq:abs_bound} follows from \Cref{thm:perfdiff}.
\end{proof}

Next, we bound the performance difference of $\pi_z^{\phi^*}$ between the latent MDP and latent model-based MDP.
\begin{lemma}
\label{thm:abs_model_bound}
If the total variation distances between (\emph{i}) the true dynamics $p_z$ and the learned model $\hat{p}_z$
and (\emph{ii}) the policy $\pi_z^{\phi^*}$ and the data-collection policy $\pi_{\gD_P}$ are bounded as
\begin{align}
    \E_{(\vs, \vz) \sim d^{\pi_{\gD_P}}(\vs, \vz)} [\TV (p_z(\cdot|\vs, \vz) \| \hat{p}_z(\cdot|\vs, \vz))] &\leq \epsilon_m', \\
    \E_{\vs \sim d^{\pi_{\gD_P}}(\vs)} [\TV (\pi_z^{\phi^*}(\cdot|\vs) \| \pi_{\gD_P}(\cdot|\vs))] &\leq \epsilon_\pi',
\end{align}
the performance difference of $\pi_z^{\phi^*}$ between $\gM_P$ and $\hat{\gM}_P$ satisfies the following inequality:
\begin{align}
    |J_{\gM_P}(\pi_z^{\phi^*}) - J_{\hat{\gM}_P}(\pi_z^{\phi^*})| \leq \frac{R}{(1-\gamma)^2}(4 \epsilon_\pi' + 2\gamma \epsilon_m').
\end{align}
\end{lemma}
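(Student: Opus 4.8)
The plan is to prove this lemma in exactly the same way \Cref{thm:model_bound} is proven, since \Cref{thm:abs_model_bound} is the verbatim analogue of the classic MBRL bound with $\gM$ replaced by the latent MDP $\gM_P$, $\hat{\gM}$ replaced by $\hat{\gM}_P$, the data-collecting policy $\pi_\gD$ replaced by $\pi_{\gD_P}$, and the evaluated policy $\pi$ replaced by $\pi_z^{\phi^*}$. Concretely, I would insert the data-collecting policy $\pi_{\gD_P}$ as an intermediate point and apply the triangle inequality:
\begin{align}
    |J_{\gM_P}(\pi_z^{\phi^*}) - J_{\hat{\gM}_P}(\pi_z^{\phi^*})|
    \leq |J_{\gM_P}(\pi_z^{\phi^*}) - J_{\gM_P}(\pi_{\gD_P})|
        + |J_{\gM_P}(\pi_{\gD_P}) - J_{\hat{\gM}_P}(\pi_z^{\phi^*})|. \nonumber
\end{align}

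Next I would bound each of the two terms with a single invocation of \Cref{thm:perfdiff}. For the first term I instantiate \Cref{thm:perfdiff} with both MDPs equal to $\gM_P$ (so the dynamics discrepancy is $0$), taking $\pi_1 = \pi_{\gD_P}$ and $\pi_2 = \pi_z^{\phi^*}$; the important point is to choose $\pi_1 = \pi_{\gD_P}$ so that the state distribution in the hypothesis of \Cref{thm:perfdiff} is $d^{\pi_{\gD_P}}$, which, using the symmetry of $\TV$, is precisely the quantity assumed bounded by $\epsilon_\pi'$. This yields a bound of $\frac{R}{(1-\gamma)^2}(2\epsilon_\pi')$. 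For the second term I again take $\pi_1 = \pi_{\gD_P}$ in $\gM_P$ (dynamics $p_z$) and $\pi_2 = \pi_z^{\phi^*}$ in $\hat{\gM}_P$ (dynamics $\hat{p}_z$); now both hypotheses of \Cref{thm:perfdiff} are met with $\epsilon_m = \epsilon_m'$ and $\epsilon_\pi = \epsilon_\pi'$, giving $\frac{R}{(1-\gamma)^2}(2\gamma\epsilon_m' + 2\epsilon_\pi')$. Summing the two bounds produces $\frac{R}{(1-\gamma)^2}(4\epsilon_\pi' + 2\gamma\epsilon_m')$, as claimed. (A minor sanity check: the latent reward $r(\vs,\vz)=\sum_a \pi_z(\va|\vs,\vz)r(\vs,\va)$ is a convex combination of original rewards, so it is still bounded in absolute value by $R$, and the bound remains valid with the same constant.)

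The only thing requiring care — and the ``obstacle,'' insofar as there is one — is the bookkeeping of which policy plays the role of $\pi_1$ in \Cref{thm:perfdiff}: its hypotheses are stated with respect to $d_1^{\pi_1}$, so one must consistently pick $\pi_1 = \pi_{\gD_P}$ (not $\pi_z^{\phi^*}$) in both applications and appeal to $\TV(\pi_{\gD_P}\|\pi_z^{\phi^*}) = \TV(\pi_z^{\phi^*}\|\pi_{\gD_P})$ to line up with the assumed bound on $\epsilon_\pi'$. Beyond that, the proof is a direct symbol substitution into the proof of \Cref{thm:model_bound}, so no new estimates or inequalities are needed.
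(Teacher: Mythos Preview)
Your proposal is correct and follows exactly the paper's approach: the paper's proof of this lemma is literally ``The proof is the same as \Cref{thm:model_bound},'' i.e., insert $\pi_{\gD_P}$ via the triangle inequality and apply \Cref{thm:perfdiff} twice. Your extra care about which policy plays the role of $\pi_1$ and about the latent reward being bounded by $R$ only makes the argument more precise than what the paper spells out.
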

\begin{proof}
The proof is the same as \Cref{thm:model_bound}.
\end{proof}

Now, we get the following performance bound of PMA:
\begin{theorem} (PMA performance bound)
If the abstraction loss, the model error, and the policy difference are bounded as follows:
\begin{align}
    \E_{(\vs, \va) \sim d^{\pi}(\vs, \va)} [\TV (p(\cdot|\vs, \va) \| p_z^{\phi^*}(\cdot|\vs, \va))] &\leq \epsilon_a, \\
    \E_{(\vs, \vz) \sim d^{\pi_{\gD_P}}(\vs, \vz)} [\TV (p_z(\cdot|\vs, \vz) \| \hat{p}_z(\cdot|\vs, \vz))] &\leq \epsilon_m', \\
    \E_{\vs \sim d^{\pi_{\gD_P}}(\vs)} [\TV (\pi_z^{\phi^*}(\cdot|\vs) \| \pi_{\gD_P}(\cdot|\vs))] &\leq \epsilon_\pi',
\end{align}
the performance difference of $\pi$ between the original MDP and the predictable latent model-based MDP is bounded as:
\begin{align}
    |J_{\gM}(\pi) - J_{\hat{\gM}_P}(\pi_z^{\phi^*})| \leq \frac{R}{(1-\gamma)^2}(2 \gamma \epsilon_a + 4 \epsilon_\pi' + 2\gamma \epsilon_m').
\end{align}
\end{theorem}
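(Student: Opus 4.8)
The plan is to derive this bound by a single triangle-inequality splitting into the two sources of suboptimality that \Cref{thm:abs_bound} and \Cref{thm:abs_model_bound} were set up to control. First I would insert the intermediate quantity $J_{\gM_P}(\pi_z^{\phi^*})$ — the return of the optimal latent policy $\pi_z^{\phi^*}$ evaluated in the \emph{true} predictable latent MDP $\gM_P$ — between $J_{\gM}(\pi)$ and $J_{\hat{\gM}_P}(\pi_z^{\phi^*})$, giving
\begin{align}
    |J_{\gM}(\pi) - J_{\hat{\gM}_P}(\pi_z^{\phi^*})| \leq |J_{\gM}(\pi) - J_{\gM_P}(\pi_z^{\phi^*})| + |J_{\gM_P}(\pi_z^{\phi^*}) - J_{\hat{\gM}_P}(\pi_z^{\phi^*})|. \nonumber
\end{align}

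Next I would bound each term by the corresponding lemma. The first term is exactly the abstraction gap handled by \Cref{thm:abs_bound}: under the hypothesis $\E_{(\vs, \va) \sim d^{\pi}(\vs, \va)}[\TV(p(\cdot|\vs,\va)\|p_z^{\phi^*}(\cdot|\vs,\va))] \leq \epsilon_a$, it is at most $\frac{2R\gamma\epsilon_a}{(1-\gamma)^2}$. The second term is the latent model-learning gap handled by \Cref{thm:abs_model_bound}: under the hypotheses on $\epsilon_m'$ and $\epsilon_\pi'$, it is at most $\frac{R}{(1-\gamma)^2}(4\epsilon_\pi' + 2\gamma\epsilon_m')$. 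Adding these and pulling out the common factor $\frac{R}{(1-\gamma)^2}$ yields $\frac{R}{(1-\gamma)^2}(2\gamma\epsilon_a + 4\epsilon_\pi' + 2\gamma\epsilon_m')$, which is the claim.

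There is no genuinely hard step remaining once those two lemmas are in hand: \Cref{thm:abs_bound} reduces to \Cref{thm:perfdiff} via the observation that $J_{\gM_P}(\pi_z^{\phi^*}) = J_{\gM^{\phi^*}}(\pi)$ for the auxiliary MDP $\gM^{\phi^*} = (\gS,\gA,\mu,p_z^{\phi^*},r)$, and \Cref{thm:abs_model_bound} mirrors \Cref{thm:model_bound}. The only point I would be careful to verify is bookkeeping: the three hypotheses of the theorem must be stated over precisely the distributions the two lemmas consume — the $\epsilon_a$ bound over $d^{\pi}$ feeding \Cref{thm:abs_bound}, and the $\epsilon_m'$, $\epsilon_\pi'$ bounds over $d^{\pi_{\gD_P}}$ feeding \Cref{thm:abs_model_bound} — and they do match exactly, so the two lemmas apply verbatim and the triangle inequality closes the argument.
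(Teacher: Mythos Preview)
Your proposal is correct and mirrors the paper's proof exactly: the paper also inserts $J_{\gM_P}(\pi_z^{\phi^*})$ via the triangle inequality, applies \Cref{thm:abs_bound} and \Cref{thm:abs_model_bound} to the two pieces, and sums to obtain $\frac{R}{(1-\gamma)^2}(2\gamma\epsilon_a + 4\epsilon_\pi' + 2\gamma\epsilon_m')$.
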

\begin{proof}
From \Cref{thm:abs_bound} and \Cref{thm:abs_model_bound}, we obtain
\begin{align}
    |J_{\gM}(\pi) - J_{\hat{\gM}_P}(\pi_z^{\phi^*})|
    &\leq |J_{\gM}(\pi) - J_{\gM_P}(\pi_z^{\phi^*})| + |J_{\gM_P}(\pi_z^{\phi^*}) - J_{\hat{\gM}_P}(\pi_z^{\phi^*})| \\
    &\leq \frac{R}{(1-\gamma)^2}(2 \gamma \epsilon_a) + \frac{R}{(1-\gamma)^2}(4 \epsilon_\pi' + 2\gamma \epsilon_m') \\
    &= \frac{R}{(1-\gamma)^2}(2 \gamma \epsilon_a + 4 \epsilon_\pi' + 2\gamma \epsilon_m').
\end{align}
\end{proof}

\section{Theoretical Comparison between PMA and DADS}
\label{sec:appx_pma_dads}

In this section, we theoretically compare the objectives of PMA and DADS in terms of
mutual information maximization, entropy approximation, and state coverage.

\paragraph{Mutual information maximization.}
\begin{wrapfigure}{r}{0.3\textwidth}
  \centering
  \raisebox{0pt}[\dimexpr\height-1.0\baselineskip\relax]{
    \begin{subfigure}[t]{1.0\linewidth}
      \includegraphics[width=\linewidth]{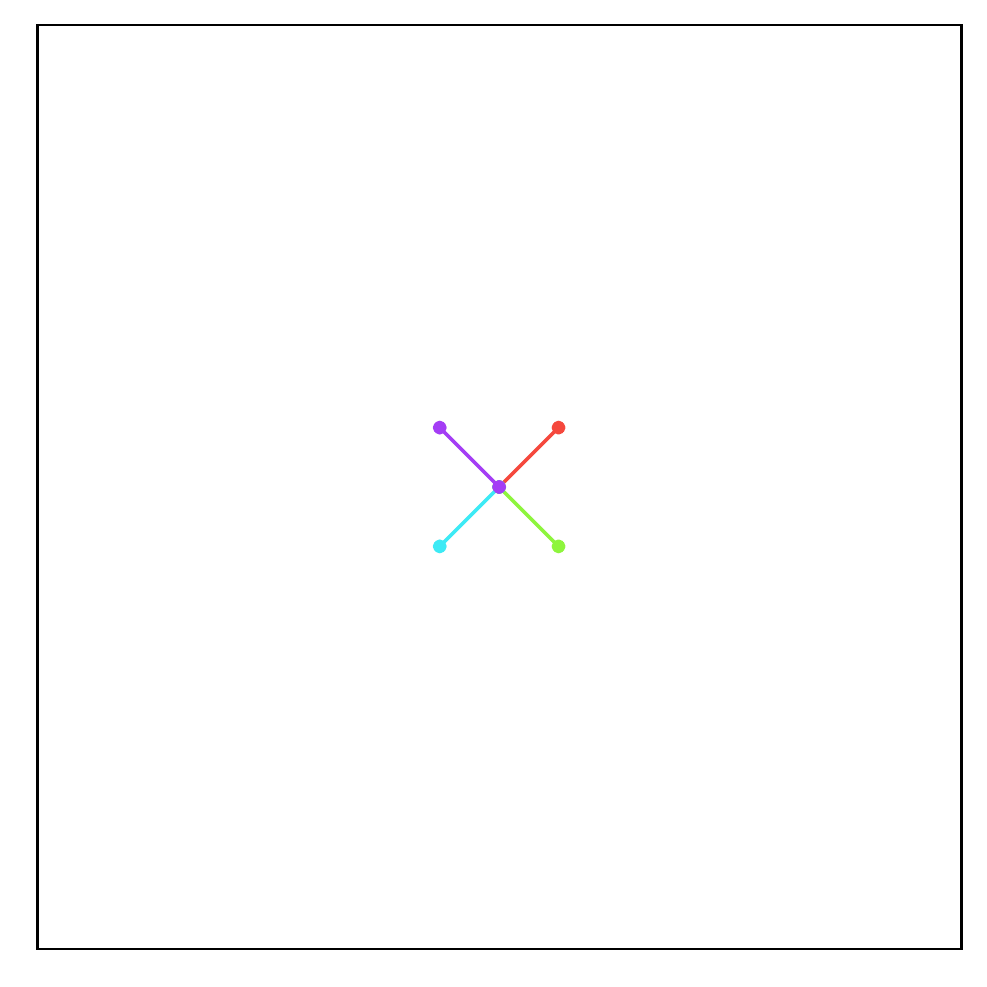}
    \end{subfigure}
  }
  \caption{
  An example of the trajectories of a DADS policy ($Z = 4$).
  }
  \label{fig:dads_skills}
\end{wrapfigure}
We first compare the mutual information (MI) term $I(\bm{S}';\bm{Z}|\bm{S})$ in the objectives of PMA and DADS.
Here, we ignore the information gain term of PMA, \ie, $\beta = 0$, which we will discuss later.
Also, in order to simplify the analysis of information-theoretic objectives, we assume that the latent action space $\gZ$ is discrete,
\ie $\gZ = [Z]$.
With these assumptions, the difference between PMA and DADS mainly lies in the exploration policy, $\pi_e(\vz|\vs)$.
While DADS first samples a latent action $\vz$ at the beginning of each rollout ($\pi_e(\cdot|\vs_0) = \text{Unif}(\gZ)$)
and persists it throughout the entire episode ($\pi_e(\vz_t = \vz_{t-1}|\vs_t) = 1$),
PMA always uses a uniform random policy ($\pi_e(\cdot|\vs) = \text{Unif}(\gZ)$) because we have assumed $\beta = 0$.

We first consider the following upper bound of the MI objective, $I(\bm{S}';\bm{Z}|\bm{S})$,
\begin{align}
    \max_{\pi_e,\pi_z} I(\bm{S}';\bm{Z}|\bm{S})
    &= \max_{\pi_e,\pi_z} H(\bm{Z}|\bm{S}) - H(\bm{Z}|\bm{S}, \bm{S}') \\
    &\leq \max_{\pi_e,\pi_z} H(\bm{Z}|\bm{S}). \label{eq:mi_ub}
\end{align}
\Cref{eq:mi_ub} achieves its maximum of $\log Z$ when $p^{\pi_e,\pi_z}(\vz|\vs)$ is a uniform random distribution,
where $p^{\pi_e,\pi_z}(\vs, \vz)$ is the state-latent action distribution from the policies.
PMA uses $\pi_e(\cdot|\vs) = \text{Unif}(\gZ)$, which precisely corresponds to this optimal condition.
On the other hand, DADS's distribution can be rewritten as
\begin{align}
    p^{\pi_e,\pi_z}(\vz|\vs) = \frac{p^{\pi_e,\pi_z}(\vs|\vz) p^{\pi_e,\pi_z}(\vz)}{p^{\pi_e,\pi_z}(\vs)}
    \propto \frac{p^{\pi_e,\pi_z}(\vs|\vz)}{p^{\pi_e,\pi_z}(\vs)}.
\end{align}
Unlike PMA, this does not necessarily correspond to a uniform distribution.
For example,
at the red dot state in \Cref{fig:dads_skills},
we can see that $p^{\pi_e,\pi_z}(\vs|\vz)$ is nonzero for the corresponding latent action
but is zero for the other latent actions.
This could make DADS suboptimal in terms of MI maximization.

\paragraph{Entropy approximation.}
A similar difference can be found in the approximation of the marginal entropy term $H(\bm{S}'|\bm{S})$.
Both PMA and DADS use the following approximation:
\begin{align}
    H(\bm{S}'|\bm{S}) &= \E \left[-\log p^{\pi_e,\pi_z}(\vs'|\vs) \right] \\
    &= \E \left[-\log \int p^{\pi_e,\pi_z}(\vz|\vs) p^{\pi_e,\pi_z}(\vs'|\vs, \vz) d\vz \right] \\
    &\approx \E \left[-\log \int u(\vz) p^{\pi_e,\pi_z}(\vs'|\vs, \vz) d\vz \right] \\
    &\approx \E\left[-\log \left( \frac{1}{L} \sum_{i=1}^L p^{\pi_e,\pi_z}(\vs'|\vs, \vz_i) \right) \right],
\end{align}
where it approximates $p^{\pi_e,\pi_z}(\vz|\vs)$ to a uniform distribution $u(\vz)$, and $\vz_i$'s are sampled from $u(\cdot)$.
While this approximation may not be accurate in DADS due to the same reason above,
PMA satisfies $p^{\pi_e,\pi_z}(\vz|\vs) = u(\vz)$, making the approximation in the $\log$ exact.

\paragraph{State coverage.}
Another difference between PMA and DADS is the presence of the information gain term in the PMA objective.
This is because the MI term alone does not necessarily cover the state space
since MI is invariant to any invertible transformations of the input random variables,
\ie, $I(\bm{X}; \bm{Y}) = I(f(\bm{X}); g(\bm{Y}))$ for any random variables $\bm{X}$ and $\bm{Y}$,
and invertible functions $f$ and $g$.
As a result, MI can be fully maximized with limited state coverage and does not necessarily encourage exploration
\citep{edl_campos2020,disdain_strouse2022,lsd_park2022},
which necessitates another term for maximizing the state-action coverage in PMA.

\section{Training Procedures}
\label{sec:appx_training}

\begin{algorithm}[t!]
    \caption{Predictable MDP Abstraction (PMA)}
    \begin{algorithmic}[1]
        \STATE Initialize action decoder $\pi_z(\va|\vs, \vz)$, VLB predictive model $\hat{p}_z(\vs'|\vs, \vz; \bm{\phi})$,
        ensemble predictive models $\{\hat{p}_z(\vs'|\vs, \vz; \bm{\theta_i})\}$, (optional) exploration policy $\pi_e(\vz|\vs)$,
        on-policy stochastic buffer $\gD_S$, on-policy deterministic buffer $\gD_D$, replay buffer $\gD$
        \FOR{$i \gets 1$ to (\# epochs)}
            \FOR{$j \gets 1$ to $\text{(\# steps per epoch)} / 2$}
                \STATE Sample latent action $\vz \sim \pi_e(\vz|\vs)$
                \STATE Sample action $\va \sim \pi_z(\va|\vs, \vz)$
                \STATE Add transition $(\vs, \vz, \va, \vs')$ to $\gD_S$, $\gD$
            \ENDFOR
            \FOR{$j \gets 1$ to $\text{(\# steps per epoch)} / 2$}
                \STATE Sample latent action $\vz \sim \pi_e(\vz|\vs)$
                \STATE Compute deterministic action $\va = \E[\pi_z(\cdot|\vs, \vz)]$
                \STATE Add transition $(\vs, \vz, \va, \vs')$ to $\gD_D$
            \ENDFOR
            \STATE Fit VLB predictive model using mini-batches from $\gD_S$
            \STATE Fit ensemble predictive models using mini-batches from $\gD_D$
            \STATE Train action decoder with $r(\vs, \vz, \va, \vs') = \log \hat{p}_z(\vs'|\vs, \vz; \bm{\phi})
            - \log \frac{1}{L} \sum_{i=1}^{L} \hat{p}_z(\vs'|\vs, \vz_i; \bm{\phi})
            + \beta \cdot \mathrm{Tr}[\sV_i[\bm{\mu}(\vs, \vz; \bm{\theta}_i)]]$ with SAC using mini-batches from $\gD$
            \STATE (Optional) Train exploration policy $\pi_e$ with SAC using mini-batches from $\gD_S$
            \STATE Clear on-policy buffers $\gD_S$, $\gD_D$
        \ENDFOR
    \end{algorithmic}
    \label{alg:pma}
\end{algorithm}

\subsection{PMA Training Procedure}
\label{sec:appx_training_pma}

PMA is trained with SAC \citep{saces_haarnoja2018}. We describe several additional training details of PMA.

\paragraph{Replay buffer.}
Since we jointly train both the action decoder and the model,
we need to be careful about using old, off-policy samples to train the components of PMA.
While we can use old samples to train the action decoder $\pi_z(\va|\vs, \vz)$
as long as we recompute the intrinsic reward (because SAC is an off-policy algorithm),
we cannot use old samples to train the predictive models or the exploration policy $\pi_e(\vz|\vs)$.
Hence, we use a replay buffer only for the action decoder, and train the other components with on-policy data.

\paragraph{Sampling strategy.}
PMA has two different kinds of predictive models: the VLB predictive model $\hat{p}_z(\vs'|\vs, \vz; \bm{\phi})$
to approximate $I(\bm{S'};\bm{Z}|\bm{S})$,
and an ensemble of $E$ models $\{\hat{p}_z(\vs'|\vs, \vz; \bm{\theta_i})\}$
to approximate $I(\bm{S}';\bm{\Theta}|\bm{\gD}, \bm{Z})$.
While one may just simply use the mean of the ensemble outputs for the VLB predictive model,
we find that it is helpful to train them separately with different sampling strategies.
Specifically, we train the VLB predictive model with stochastic trajectories
and the ensemble models with deterministic trajectories.
Since we always use deterministic actions from the pre-trained action decoder during the test time,
it is beneficial to have a latent predictive model trained from such deterministic actions.
As such, at each epoch, we sample a half of the epoch transitions using deterministic actions to train the ensemble models
and the other half using stochastic actions to train the other components.
At test time, we use the mean of the ensemble model's outputs for model-based planning or model-based RL.

We summarize the full training procedure of PMA in \Cref{alg:pma}.

\subsection{MPPI Training Procedure}
\label{sec:appx_training_mppi}

\begin{algorithm}[t!]
    \caption{MPPI with PMA}
    \begin{algorithmic}[1]
        \STATE Initialize mean parameter $\bm{\mu}_{0:T-1}$
        \FOR{$t \gets 0$ to $T - 1$}
            \FOR{$m \gets 0$ to $M - 1$}
                \STATE Sample $N$ latent action sequences $\vz_{t:t+H-1}^{(i)} \sim \gN(\bm{\mu}_{t:t+H-1}, \bm{\Sigma})$ for $i \in [N]$
                \STATE Compute sum of predicted rewards $\{R^{(i)} := \sum_{j=t}^{t+H-1} \hat{r}_j^{(i)} \}$
                using predicted states from latent predictive model $\hat{p}_z(\vs'|\vs, \vz)$
                \STATE Update $\bm{\mu}_{t:t+H-1}$ using \Cref{eq:mppi_update}
            \ENDFOR
            \STATE Perform single action $\va_t = \E[\pi_z(\cdot|\vs_t, \vz_t)]$ with $\vz_t = \bm{\mu}_t$ and get $\vs_{t+1}$ from environment
        \ENDFOR
    \end{algorithmic}
    \label{alg:mppi}
\end{algorithm}

MPPI \citep{mppi_williams2016} is a zeroth-order planning algorithm based on model predictive control \citep{mpc_richalet1978},
which finds an optimal action sequence via iterative refinement of randomly sampled actions.
Specifically,
at step $t$,
MPPI aims to find the optimal (latent) action sequence
$(\vz_t, \vz_{t+1}, \dots, \vz_{t+H-1})$ of length $H$ via $M$ iterations of refinement.
At each iteration,
MPPI samples $N$ action sequences from a Gaussian distribution,
$\vz_{t:t+H-1}^{(i)} \sim \gN(\bm{\mu}_{t:t+H-1}, \bm{\Sigma})$ for $i \in [N]$,
where $\bm{\mu}_{t:t+H-1}$ is the current mean parameter and
$\bm{\Sigma}$ is a fixed diagonal covariance matrix.
It then computes the sum of the predicted rewards $\{R^{(i)} := \sum_{j=t}^{t+H-1} \hat{r}_j^{(i)} \}$
using the predicted states from the latent predictive model $\hat{p}_z(\vs'|\vs, \vz)$,
and updates the mean parameter as follows:
\begin{align}
    \bm{\mu}_{t:t+H-1} \gets \frac{\sum_{i \in [N]} e^{\alpha R^{(i)}} \vz^{(i)}_{t:t+H-1}}{\sum_{i \in [N]} e^{\alpha R^{(i)}}}, \label{eq:mppi_update}
\end{align}
where $\alpha$ is a temperature hyperparameter.
After $M$ iterations of refinement, the agent performs only the first latent action
and repeats this process to find the next optimal sequence $(\vz_{t+1}, \vz_{t+2}, \dots, \vz_{t+H})$.
We summarize the full training procedure of MPPI in \Cref{alg:mppi}.

\subsection{MBPO Training Procedure}
\label{sec:appx_training_mbpo}

\begin{algorithm}[t!]
    \caption{MBPO with PMA}
    \begin{algorithmic}[1]
        \STATE Initialize task policy $\pi(\vz|\vs)$, frozen replay buffer $\gD_{\text{frozen}}$, replay buffer $\gD$
        \FOR{$i \gets 1$ to (\# epochs)}
            \STATE $d \gets \texttt{TRUE}$
            \FOR{$j \gets 1$ to (\# steps per epoch)}
                \IF{$d = \texttt{TRUE}$}
                    \STATE Sample $\vs$ from either $\mu(\cdot)$ with a probability of $P$
                    or $\gD_{\text{frozen}}$ with a probability of $(1-P)$
                \ENDIF
                \STATE Sample latent action $\vz \sim \pi(\vz|\vs)$
                \STATE Predict next state $\vs' = \E[\hat{p}_z(\cdot|\vs, \vz)]$
                \STATE Compute predicted reward $r$ and predicted termination $d$ using $\vs$ and $\vs'$
                \IF{(current horizon length) $\geq H$}
                    \STATE $d \gets \texttt{TRUE}$
                \ENDIF
                \STATE Add transition $(\vs, \vz, r, \vs')$ to $\gD$
            \ENDFOR
            \STATE Train task policy with SAC using mini-batches from $\gD$
        \ENDFOR
        \FOR{$i \gets 1$ to (\# evaluation rollouts)}
            \WHILE{not termination}
                \STATE Compute latent action $\vz = \E[\pi(\cdot|\vs)]$
                \STATE Compute action $\va = \E[\pi_z(\cdot|\vs,\vs)]$
                \STATE Get $r$ and $\vs'$ from environment
            \ENDWHILE
        \ENDFOR
    \end{algorithmic}
    \label{alg:mbpo}
\end{algorithm}

MBPO \citep{mbpo_janner2019} is a Dyna-style \citep{dyna_sutton1990} model-based RL algorithm,
which trains a model-free RL method on top of truncated model-based rollouts starting from intermediate environment states.
In our zero-shot setting,
MBPO uses the restored replay buffer from unsupervised training
to sample starting states.
Specifically,
at each epoch,
MBPO generates multiple model-based truncated trajectories
$(\vs_0, \va_0, r_0, \vs_1, \va_1, r_1, \dots, \vs_{H-1}, \va_{H-1}, r_{H-1})$ of length $H$
using the learned predictive model,
where $\vs_0$ is sampled either from the true initial state distribution $\mu(\cdot)$ with a probability of $P$
or from the restored replay buffer $\gD_{\text{frozen}}$ with a probability of $(1-P)$.
It then updates the task policy $\pi(\vz|\vs)$ using the collected trajectories with SAC \citep{sac_haarnoja2018},
and repeats this process.
Note that the restored replay buffer is only used to provide starting states.
Also, if we set $P$ to $1$ and $H$ to the original horizon length $T$,
MBPO corresponds to vanilla SAC trained purely on model-based transitions.
After completing the training of MBPO,
we measure the performance by testing the learned task policy in the true environment.
We summarize the full training procedure of MBPO in \Cref{alg:mbpo}.

\section{Implementation Details}

We implement PMA on top of the publicly released codebase of LiSP \citep{lisp_lu2021}.
We release our implementation at the following repository: {\url{https://github.com/seohongpark/PMA}}.
We run our experiments on an internal cluster consisting of A5000 or similar GPUs.
Each run in our experiments takes no more than two days.

\subsection{Environments}
For our benchmark, we use seven MuJoCo robotics environments from OpenAI Gym \citep{mujoco_todorov2012,openaigym_brockman2016}:
HalfCheetah, Ant, Hopper, Walker2d, InvertedPendulum, InvertedDoublePendulum, and Reacher.
We mostly follow the environment configurations used in \citet{dads_sharma2020}.
We use an episode length of $200$ for all environments.
For HalfCheetah, Ant, Hopper, and Walker2d,
we exclude the global coordinates from the input to the policy.
For Hopper and Walker2d, we use an action repeat of $5$
to have similar discretization time scales to the other environments.
Regarding early termination,
we follow the original environment configurations:
Ant, Hopper, Walker2d, InvertedPendulum, and InvertedDoublePendulum have early termination conditions,
while HalfCheetah and Reacher do not.

\subsection{Tasks}
We describe the reward functions of our $13$ tasks.
We mostly follow the reward scheme of the original task of each environment.
For the environments with early termination, the agent additionally receives a reward of $1$ at every step.

\textbf{HalfCheetah Forward:} The reward is the $x$-velocity of the agent.

\textbf{HalfCheetah Backward:} The reward is the negative of the $x$-velocity of the agent.

\textbf{Ant East:} The reward is the $x$-velocity of the agent.

\textbf{Ant North:} The reward is the $y$-velocity of the agent.

\textbf{Hopper Forward:} The reward is the $x$-velocity of the agent.

\textbf{Hopper Hop:} The reward is the maximum of $0$ and the $z$-velocity of the agent.

\textbf{Walker2d Forward:} The reward is the $x$-velocity of the agent.

\textbf{Walker2d Backward:} The reward is the negative of the $x$-velocity of the agent.

\textbf{InvertedPendulum Stay:} The reward is the negative of the square of the $x$-position of the agent.

\textbf{InvertedPendulum Forward:} The reward is the $x$-velocity of the agent.

\textbf{InvertedDoublePendulum Stay:} The reward is the negative of the square of the $x$-position of the agent.

\textbf{InvertedDoublePendulum Forward:} The reward is the $x$-velocity of the agent.

\textbf{Reacher Reach:} The reward is the negative of the Euclidean distance between the target position and the fingertip.

\subsection{Hyperparameters}

We present the hyperparameters used in our experiments in \Cref{table:hyp,table:hyp_mppi,table:hyp_mbpo}.
For the MPPI results in \Cref{fig:mppi_all},
we individually tune the MOPO penalty $\lambda$ for each method and task (\Cref{table:hyp_mppi}),
where we consider $\lambda \in \{0, 1, 5, 20, 50\}$.
For the MBPO results in \Cref{fig:mbpo_all},
we individually tune the MOPO penalty $\lambda$, the rollout horizon length $H$, and the reset probability $P$
for each method and task (\Cref{table:hyp_mbpo}),
where we consider $\lambda \in \{0, 1, 5, 20, 50\}$ and $(H, P) \in \{(1, 0), (5, 0), (15, 0), (15, 0.5), (50, 0), (50, 0.5), (200, 1)\}$.
We use $(H, P) = (200, 1)$ for the SAC results in \Cref{fig:sac_all}.
When training MBPO in Ant, we additionally apply $\max(0, \cdot)$ to the penalty-augmented reward
to prevent the agent from preferring early termination to avoid negative rewards.

\begin{table}[t]
    \caption{Hyperparameters.}
    \label{table:hyp}
    \vskip 0.15in
    \begin{center}
    \begin{tabular}{lc}
        \toprule
        Hyperparameter & Value \\
        \midrule
        \# epochs & $10000$ \\
        \# environment steps per epoch & $4000$ \\
        \# gradient steps per epoch & $64$ (policy), $32$ (model), $1$ (RND network) \\
        Episode length $T$ & $200$ \\
        Minibatch size & $256$ \\
        Discount factor $\gamma$ & $0.995$ \\
        Replay buffer size & $100000$ \\
        \# hidden layers & $2$ \\
        \# hidden units per layer & $512$ \\
        Nonlinearity & ReLU \\
        Optimizer & Adam \citep{adam_kingma2014} \\
        Learning rate & $3 \times 10^{-4}$ \\
        Target network smoothing coefficient $\tau$ & $0.995$ \\
        Ensemble size & $5$ \\
        Reward scale & $10$ \\
        Latent action dimensionality $|\gZ|$ & $|\gA|$ \\
        PMA, DADS \# samples for entropy approximation $L$ & $100$ \\
        PMA ensemble variance scale $\beta$ & $5$ (Walker2d), $50$ (Hopper), $0.03$ (Otherwise) \\
        CM (Disag.) ensemble variance scale $\beta$ & $0.3$ (Walker2d), $1$ (HalfCheetah, Ant), $3$ (Hopper), $10$ (Otherwise) \\
        MPPI horizon length $H$ & $15$ \\
        MPPI population size $N$ & $256$ \\
        MPPI \# iterations $M$ & $10$ \\
        MPPI temperature $\alpha$ & $1$ \\
        MPPI variance $\Sigma$ & $I$ \\
        \bottomrule
    \end{tabular}
    \end{center}
    \vskip -0.1in
\end{table}

\begin{table}[t]
    \caption{MOPO $\lambda$ for the MPPI results (\Cref{fig:mppi_all}).}
    \label{table:hyp_mppi}
    \vskip 0.15in
    \begin{center}
    \begin{tabular}{lccccc}
        \toprule
        Task & PMA & DADS & CM (Rand.) & CM (Disag.) & CM (RND) \\
        \midrule
        HalfCheetah Forward & $1$ & $1$ & $1$ & $1$ & $1$ \\
        HalfCheetah Backward & $1$ & $1$ & $1$ & $1$ & $1$ \\
        Ant East & $20$ & $20$ & $20$ & $20$ & $20$ \\
        Ant North & $20$ & $20$ & $20$ & $20$ & $20$ \\
        Hopper Forward & $5$ & $5$ & $1$ & $5$ & $5$ \\
        Hopper Hop & $1$ & $1$ & $5$ & $5$ & $5$ \\
        Walker2d Forward & $1$ & $1$ & $1$ & $1$ & $1$ \\
        Walker2d Backward & $1$ & $1$ & $1$ & $1$ & $1$ \\
        InvertedPendulum Stay & $1$ & $1$ & $1$ & $1$ & $1$ \\
        InvertedPendulum Forward & $5$ & $5$ & $5$ & $5$ & $5$ \\
        InvertedDoublePendulum Stay & $0$ & $5$ & $5$ & $5$ & $5$ \\
        InvertedDoublePendulum Forward & $5$ & $5$ & $1$ & $5$ & $1$ \\
        Reacher Reach & $0$ & $0$ & $0$ & $0$ & $0$ \\
        \bottomrule
    \end{tabular}
    \end{center}
    \vskip -0.1in
\end{table}

\begin{table}[t]
    \caption{(MOPO $\lambda$, MBPO $H$, MBPO $P$) for the MBPO results (\Cref{fig:mbpo_all}).}
    \label{table:hyp_mbpo}
    \vskip 0.15in
    \begin{center}
    \begin{tabular}{lccccc}
        \toprule
        Task & PMA & DADS & CM (Rand.) & CM (Disag.) & CM (RND) \\
        \midrule
        HalfCheetah Forward & $(1, 15, 0.5)$ & $(1, 15, 0.5)$ & $(1, 15, 0.5)$ & $(1, 15, 0.5)$ & $(1, 15, 0.5)$ \\
        Ant East & $(50, 15, 0.5)$ & $(10, 15, 0.5)$ & $(50, 15, 0.5)$ & $(10, 15, 0.5)$ & $(50, 15, 0.5)$ \\
        Hopper Forward & $(1, 200, 1)$ & $(1, 200, 1)$ & $(1, 200, 1)$ & $(1, 50, 0.5)$ & $(1, 15, 0.5)$ \\
        Walker2d Forward & $(1, 50, 0)$ & $(1, 50, 0)$ & $(1, 15, 0)$ & $(1, 50, 0)$ & $(1, 15, 0)$ \\
        \bottomrule
    \end{tabular}
    \end{center}
    \vskip -0.1in
\end{table}

\end{document}

